\documentclass[journal]{IEEEtran}
%\usepackage[square, comma, sort&compress, numbers]{natbib}
% If IEEEtran.cls has not been installed into the LaTeX system files,
% manually specify the path to it like:
% \documentclass[journal]{../sty/IEEEtran}
\usepackage{graphicx}
\usepackage{amssymb}
\usepackage{amsfonts}
\usepackage{mathrsfs}
\usepackage{amsmath}
\usepackage{algorithm}
\usepackage{algorithmic}
\usepackage{multirow}
\usepackage{booktabs}
\usepackage[table]{xcolor}
\usepackage{cite}
\usepackage{setspace}
\definecolor{mygray}{gray}{.9}
\definecolor{babyblueeyes}{rgb}{0.7, 0.8, 1}
\newtheorem{theorem}{Theorem}[section]
\newtheorem{proposition}[theorem]{Proposition}
\renewcommand{\algorithmicrequire}{\textbf{Input:}}

\newenvironment{proof}{\hspace{0ex}\textsc{Proof}.\hspace{1ex}}{\hfill$\blacksquare$\newline}
\setlength{\abovecaptionskip}{2pt}
\setlength{\belowcaptionskip}{0pt}

\ifCLASSINFOpdf
  % \usepackage[pdftex]{graphicx}
  % declare the path(s) where your graphic files are
  % \graphicspath{{../pdf/}{../jpeg/}}
  % and their extensions so you won't have to specify these with
  % every instance of \includegraphics
  % \DeclareGraphicsExtensions{.pdf,.jpeg,.png}
\else
  % or other class option (dvipsone, dvipdf, if not using dvips). graphicx
  % will default to the driver specified in the system graphics.cfg if no
  % driver is specified.
  % \usepackage[dvips]{graphicx}
  % declare the path(s) where your graphic files are
  % \graphicspath{{../eps/}}
  % and their extensions so you won't have to specify these with
  % every instance of \includegraphics
  % \DeclareGraphicsExtensions{.eps}
\fi

\begin{document}
%
% paper title
% can use linebreaks \\ within to get better formatting as desired
% Do not put math or special symbols in the title.
\title{Low-rank Matrix Factorization under General Mixture Noise Distributions}
%
%
% author names and IEEE memberships
% note positions of commas and nonbreaking spaces ( ~ ) LaTeX will not break
% a structure at a ~ so this keeps an author's name from being broken across
% two lines.
% use \thanks{} to gain access to the first footnote area
% a separate \thanks must be used for each paragraph as LaTeX2e's \thanks
% was not built to handle multiple paragraphs
%

\author{Xiangyong Cao,
    Qian~Zhao,
	Deyu~Meng$^\ast$,~\IEEEmembership{Member,~IEEE,}
	Yang~Chen,~Zongben~Xu% <-this % stops a space
\thanks{Xiangyong Cao, Qian Zhao, Deyu Meng, Yang Chen, and Zongben Xu are with the School of Mathematics and Statistics and Ministry of Education Key Lab of Intelligent Networks and Network Security, Xi'an Jiaotong University, Xi'an 710049, China (caoxiangyong45@gmail.com, timmy.zhaoqian@gmail.com, dymeng@mail.xjtu.edu.cn, chengyang9103@gmail.com, zbxu@mail.xjtu.edu.cn)}.
\thanks{$^\ast$Deyu Meng is the corresponding author}.
}
\maketitle

% As a general rule, do not put math, special symbols or citations
% in the abstract or keywords.
\begin{abstract}
Many computer vision problems can be posed as learning a low-dimensional subspace from high dimensional data. The low rank matrix factorization (LRMF) represents a commonly utilized subspace learning strategy. Most of the current LRMF techniques are constructed on the optimization problems using $L_1$-norm and $L_2$-norm losses, which mainly deal with Laplacian and Gaussian noises, respectively. To make LRMF capable of adapting more complex noise, this paper proposes a new LRMF model by assuming noise as Mixture of Exponential Power (MoEP) distributions and proposes a penalized MoEP (PMoEP) model by combining the penalized likelihood method with MoEP distributions. Such setting facilitates the learned LRMF model capable of automatically fitting the real noise through MoEP distributions. Each component in this mixture is adapted from a series of preliminary  super- or sub-Gaussian candidates. Moreover, by facilitating the local continuity of noise components, we embed Markov random field into the PMoEP model and further propose the advanced PMoEP-MRF model. An Expectation Maximization (EM) algorithm and a variational EM (VEM) algorithm are also designed to infer the parameters involved in the proposed PMoEP and the PMoEP-MRF model, respectively. The superseniority of our methods is demonstrated by extensive experiments on synthetic data, face modeling, hyperspectral image restoration and background subtraction.
\end{abstract}
% Note that keywords are not normally used for peerreview papers.
\begin{IEEEkeywords}
Low-rank matrix factorization, mixture of exponential power distributions, Expectation Maximization algorithm, face modeling, hyperspectral image restoration, background subtraction.
\end{IEEEkeywords}

\IEEEpeerreviewmaketitle
\section{Introduction}
Many computer vision, machine learning, data mining and statistical problems can be formulated as the problem of extracting the intrinsic low dimensional subspace from input high-dimensional data. The extracted subspace tends to deliver the refined latent knowledge underlying data and thus has a wide range of applications including
structure from motion~\cite{tomasi1992shape}, face recognition~\cite{wright2009robust}, collaborative filtering~\cite{koren2008factorization}, information retrieval~\cite{deerwester1990indexing}, social networks~\cite{cheng2012fused}, object recognition~\cite{turk1991eigenfaces}, layer extraction~\cite{ke2001subspace} and plane-based pose estimation~\cite{sturm2000algorithms}.
\begin{figure}
	\centering
	\includegraphics[width=1\linewidth]{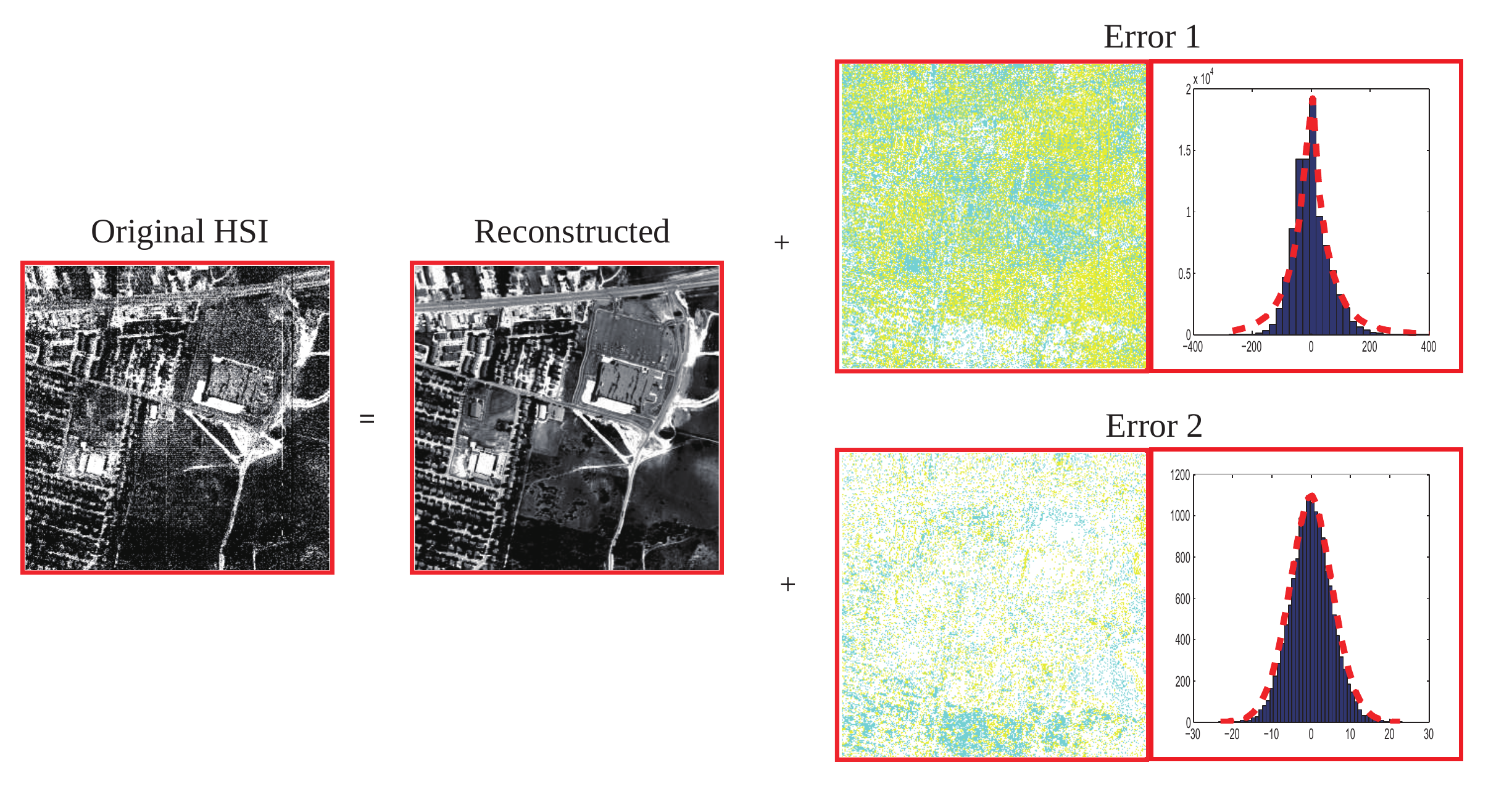}
	\caption{From left to right: Original hyperspectral image (HSI), reconstructed image, two extracted noise images with their histograms by the proposed methods. (Top: $EP_{0.2}$ noise image and histogram. Bottom: $EP_{1.8}$ noise image and histogram).}\label{intro_fig}
	\vspace{-0mm}
\end{figure}

Low rank matrix factorization (LRMF) is one of the most commonly utilized techniques for subspace learning. Given a data matrix $\mathbf{Y}\in\mathcal{R}^{m\times n}$ with entries $y_{ij}s$, the LRMF problem can be mathematically formulated as
\begin{eqnarray}\label{LRMF}
\min_{\mathbf{U},\mathbf{V}}||\mathbf{W}\odot(\mathbf{Y}-\mathbf{U}\mathbf{V}^{T})||,
\end{eqnarray}
where $\mathbf{W}$ is the indicator matrix with $w_{ij} = 0$ if $y_{ij}$ is missing and 1 otherwise, and $\mathbf{U}\in \mathcal{R}^{m\times r}$ and $\mathbf{V}\in \mathcal{R}^{n\times r}$ are low-rank matrices  ($r<\min(m,n)$). The operator $\odot$ denotes the Hadamard product (the component-wise multiplication) and $||\cdot||$ corresponds to a certain noise measure.

Under the assumption of Gaussian noise, it is natural to utilize the $L_2$-norm (Frobenius norm) as the noise measure, which has been extensively studied in LRMF literatures~\cite{srebro2003weighted,buchanan2005damped,okatani2007wiberg, aguiar2008spectrally, zhao2010successively, okatani2011efficient, wen2012solving, mitra2010large}. However, it has been recognized in many real applications that these methods constructed on $L_2$ norm are sensitive to outliers and non-Gaussian noise. In order to introduce robustness, the $L_1$-norm based models~\cite{ke2005robust, eriksson2010efficient,zheng2012practical,kwak2008principal,shu2014robust, ji2010robust} have attracted much attention recently. However, the $L_1$-norm is only optimal for Laplace-like noise and still very limited for handling various types of noise encountered in real problems. Taking the hyper-spectral image (HSI) as an example, it has been investigated in~\cite{zhang2014hyperspectral} that there are mainly two kinds of noise embedded in such type of data, i.e., sparse noise (stripe and deadline) and Gaussian-like noise, as depicted in Fig.~\ref{intro_fig}. The stripe noise is produced by the non-uniform sensor response which conducts the deviation of gray values of the original image continuously towards one direction. This noise always very sparsely located on edges and in texture areas of an image. The deadline noise, which is induced by some damaged sensor, results in zero or very small pixel values of entire columns of images along some HSI bands. The Gaussian-like noise is induced by some random disturbation during the transmission process of hyper-spectral signals. It is easy to see that such kind of complex noise cannot be well fit by either Laplace or Gaussian, which means that neither $L_1$-norm nor $L_2$-norm LRMF models are proper for this type of data.

Very recently, some novel models were presented to expand the availability of LRMF under more complex noise. The key idea is assuming that the noise follows a more complicated mixture of Gaussians (MoG)~\cite{meng2013robust}, which is expected to better fit real noise, since the MoG constructs a universal approximator to any continuous density function in theory~\cite{maz1996approximate}. However, this method still cannot finely adapt real data noise. On one hand, MoG can approximate a complex distribution, e.g. Laplace, only under the assumption that the number of components goes to infinity, while in applications only a finite number of components can be specified. On the other hand, it also lacks a theoretically sound manner to properly select the number of Gaussian mixture components based on the practical noise extent mixed in data. Thus, it is crucial to construct a better strategy with more adaptive distribution modeling capability on data noises beyond MoG.

In this paper, we propose a new LRMF method with a more general noise model to address the aforementioned issues. Specifically, we encode the noise as a mixture distribution of a series of sub- and super-Gaussians (i.e., general exponential power (EP) distribution), and formulate LRMF as a penalized MLE model, called PMoEP model \cite{cao2015PMoEP}. Moreover, by facilitating the local continuity of noise components, we embed Markov random field into the PMoEP model and propose the PMoEP-MRF model. Then we design an Expectation Maximization (EM) algorithm and a variational EM (VEM) algorithm to estimate the parameters involved in the proposed PMoEP model and PMoEP-MRF model, respectively, and prove their convergence. The two new methods are not only capable of adaptively fitting complex real noise by EP noise components with proper parameters, but also able to automatically learn the proper number of noise components from data, and thus can better recover the true low-rank matrix from corrupted data as verified by extensive experiments.

The rest of the paper is organized as follows. In Section II, the related work regarding LRMF is discussed. In Section III, we first present the PMoEP model and the corresponding EM algorithm, and then conduct the convergence analysis of the proposed algorithm. The PMoEP-MRF model and the corresponding variational EM algorithm are proposed in Section IV. In Section V, extensive experiments are conducted to substantiate the superiority of the proposed models over previous methods. Finally, conclusions are drawn in Section VI. Throughout the paper, we denote scalars, vectors, and matrices as the non-bold letters, bold lower case letters, and bold upper case letters, respectively.

\section{Related work}
The $L_2$ norm LRMF with missing data has been studied for decades. Gabriel and Zamir~\cite{gabriel1979lower} proposed a weighted SVD method as the early attempt for this task. They used alternated minimization to find the principal subspace underlying the data. Srebro and Jaakkola~\cite{srebro2003weighted} proposed the Weighted Low-rank Approximation (WLRA) algorithm to enhance efficiency of LRMF calculation. Buchanan and Fitzgibbon~\cite{buchanan2005damped} further proposed a regularized model that adds a regularization term and then adopts the damped newton algorithm to estimate the subspaces. However, it cannot handle large-scale problems due to the infeasibility of computing the Hessian matrix over a large number of variables. Okatani and Deguchi~\cite{okatani2007wiberg} showed that a Wiberg marginalization strategy on $\mathbf{U}$ and $\mathbf{V}$ can provide a better and robust initialization and proposed the Wiberg algorithm that updates $\mathbf{U}$ via least squares while updates $\mathbf{V}$ by a Gauss-Newton step in each iteration. Later, the Wiberg algorithm was extended to a damped version to achieve better convergence by Okatani et al.~\cite{okatani2011efficient}. Aguiar et al.~\cite{aguiar2008spectrally} deduced a globally optimal solution to $L_2$-LRMF with missing data under the assumption that the missing data has a special Young diagram structure. Zhao and Zhang~\cite{zhao2010successively} formulated the $L_2$- norm LRMF as a constrained model to improve its stability in real applications.
Wen et al.~\cite{wen2012solving} adopted the alternating strategy to solve the $L_2$-norm LRMF problem. Mitra et al.~\cite{mitra2010large} proposed an augmented Lagrangian method to solve the $L_2$-norm LRMF problem for higher accuracy. However, all of these methods minimize the $L_2$-norm or its variations and is only optimal for Gaussian-like noise.

To make subspace learning method less sensitive to outliers, some robust loss functions have been investigated. For example, De la Torre and Black~\cite{de2003framework} adopted the Geman-McClure function and then used the iterative reweighted least square (IRLS) method to solve the induced optimization problem. In the last decade, the $L_1$-norm has become the most popular robust loss function along this research line. Ke and Kanade~\cite{ke2005robust} initially replaced the $L_2$-norm with the $L_1$-norm for LRMF, and then solved the optimization by alternated convex programming (ACP) method. Kwak~\cite{kwak2008principal} later proposed to maximize the $L_1$-norm of the projection of data points onto the unknown principal directions instead of minimizing the residue. Eriksson and Hengel~\cite{eriksson2010efficient}
experimentally showed that the ACP approach does not converge to the desired point with high probability, and thus introduced the $L_1$-Wiberg approach to address this issue. Zheng et al.~\cite{zheng2012practical} added more constraints to the factors $\mathbf{U}$ and $\mathbf{V}$ for $L_1$-norm LRMF, and solved the optimization by
ALM, which improved the performance in structure from motion application. Within the probabilistic framework, Wang et al.~\cite{wang2012probabilistic} proposed probabilistic robust matrix factorization (PRMF) that modeled the noise as a Laplace distribution, which has been later extended to fully Bayesian settings by Wang and Yeung~\cite{wang2013bayesian}. However, these methods optimize the $L_1$-norm and thus are only optimal for Laplace-like noise.

Beyond Gaussian or Laplace, other types of noise assumptions have also been attempted recently to make the model adaptable to more complex noise scenarios. Lakshminarayanan et al.~\cite{lakshminarayanan2011robust} assumed that the noise is drawn from a student-t distribution. Babacan et al.~\cite{babacan2012sparse} proposed a Bayesian methods for low-rank matrix estimation modeling the noise as a combination of sparse and Gaussian. To handle more complex noise, Meng and De la Torre~\cite{meng2013robust} modeled the noise as a MoG distribution for LRMF, and later was extended to the Bayesian framework by Chen et al.~\cite{chen2015bayesian} and to RPCA by Zhao et al.~\cite{zhao2014robust}. Although better than traditional methods, these methods are still very limited in dealing with complex noise in real scenarios.

\section{LRMF with MoEP noise}
In this section, we first present the new LRMF model with MoEP noise, called PMoEP model, and then design an EM algorithm to solve it. Finally, we give the convergence analysis of the proposed EM algorithm and the implementation issues.

\subsection{PMoEP model}
In LRMF, from a generative perspective, each element $y_{ij}(i=1,2,\dots,m,j=1,2,\dots,n)$ of the data matrix $\mathbf{Y}$ can be modeled as
\begin{eqnarray}
	y_{ij}=\mathbf{u}_{i}\mathbf{v}_{j}^{T} + e_{ij},
\end{eqnarray}
where $\mathbf{u}_{i}$ and $\mathbf{v}_{i}$ represent the $i^{th}$ row vectors of $\mathbf{U}$ and $\mathbf{V}$, respectively, and $e_{ij}$ is the noise embedded in $y_{ij}$. Instead of assuming that the noise obeys Gaussian~\cite{srebro2003weighted}, Laplace~\cite{ke2005robust} or MoG~\cite{meng2013robust} distributions as previous methods, we assume that the noise $e_{ij}$ follows more flexible mixture of Exponential Power (EP) distributions:
\begin{eqnarray}\label{MoEP}
	\mathbb{P}(e_{ij}) = \sum_{k=1}^K \pi_{k} f_{p_{k}}(e_{ij};0,\eta_k),
\end{eqnarray}
where $\pi_{k}$ is the mixing proportion with $\pi_{k}\geq 0$ and $\sum_{k=1}^{K}\pi_{k}=1$, $K$ is the number of the mixture components and $f_{p_{k}}(e_{ij};0,\eta_k)$ denotes the $k^{th}$ EP distribution with parameter $\eta_{k}$ and $p_{k} (p_{k}>0)$. Let $\mathbf{p}=[p_1,p_2,\dots,p_{K}]$, in which each $p_{k}$ can be variously specified. As defined in~\cite{mineo2005software}, the density function of the EP distribution ($p>0$) with zero mean is
\begin{eqnarray}\label{EP_pdf}
	f_{p}(e;0,\eta) &=& \frac{p\eta^{\frac{1}{p}}}{2\Gamma(\frac{1}{p})}\exp\{-\eta|e|^{p}\},
\end{eqnarray}
where $\eta$ is the precision parameter, $p$ is the shape parameter and $\Gamma(\cdot)$ is the Gamma function. By changing the shape parameter $p$, the EP distribution describes both leptokurtic ($0<p<2$) and platykurtic ($p>2$) distributions. In particular, we obtain the Laplace distribution with $p=1$, the Gaussian distribution with $p=2$ and the Uniform distribution with $p\rightarrow \infty$ (see Fig. \ref{EPpdf}). Therefore, all previous cases including $L_2$, $L_1$, MoG and any combinations of them are just special cases of MoEP. By setting $\eta=1/(p\sigma^{p})$, the EP distribution (\ref{EP_pdf}) can be equivalently written as $EP_{p}(e;0,p\sigma^{p})$.

\begin{figure}
	\centering
	\includegraphics[width=0.85\linewidth]{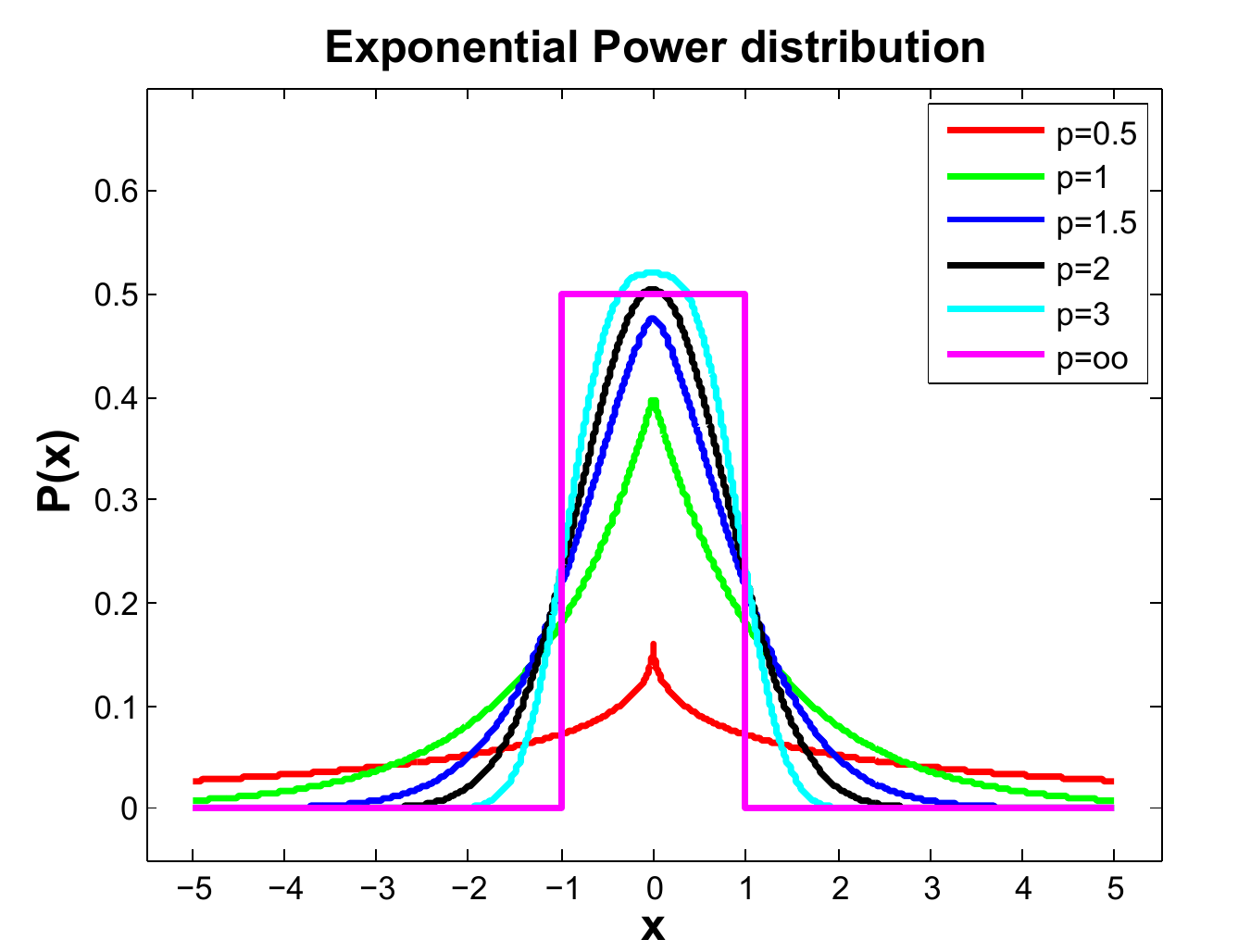}
	\caption{The probability density function of EP distributions.}\label{EPpdf}
\end{figure}

In our model, we assume that each noise $e_{ij}$ is equipped with an indicator variable $\mathbf{z}_{ij}=[z_{ij1},z_{ij2},\dots,z_{ijK}]^{T}$, where $z_{ijk}\in \{0,1\}$ and $\sum_{k=1}^{K}z_{ijk}=1$. $z_{ijk}=1$ implies that the noise $e_{ij}$ is drawn from the $k^{th}$ EP distribution. $\mathbf{z}_{ij}$ obeys a multinomial distribution $\mathbf{z}_{ij}\sim \mathcal{M}(\boldsymbol{\pi})$, where $\boldsymbol{\pi}=[\pi_1,\pi_2,\dots,\pi_{K}]^{T}$. Then we have:
\begin{eqnarray}
	\mathbb{P}(e_{ij}|\mathbf{z}_{ij}) &=& \prod_{k=1}^{K} f_{p_{k}}(e_{ij};0,\eta_{k})^{z_{ijk}},\\
	\mathbb{P}(\mathbf{z}_{ij};\boldsymbol{\pi}) &=& \prod_{k=1}^{K}\pi_{k}^{z_{ijk}}.
\end{eqnarray}
Denoting $\mathbf{E}=(e_{ij})_{m\times n}$, $\mathbf{Z}=(\mathbf{z}_{ij})_{m\times n}$ and $\mathbf{\Theta}=\{\boldsymbol{\pi},\boldsymbol{\eta},\mathbf{U},\mathbf{V}\}$ with $\boldsymbol{\eta}=[\eta_1,\eta_2,\dots,\eta_K]^{T}$, the \textit{complete likelihood function} can then be written as
\begin{eqnarray}
	\mathbb{P}(\mathbf{E},\mathbf{Z};\mathbf{\Theta})&=& \prod_{i,j\in \Omega}\prod_{k=1}^{K}[\pi_{k}f_{p_{k}}(e_{ij};0,\eta_{k})]^{z_{ijk}},
\end{eqnarray}
where $\Omega$ is the index set of the non-missing entries in $\mathbf{Y}$.
Then the \textit{log-likelihood function} is
\begin{equation}\label{loglikelihood}
	l(\mathbf{\Theta})=\log{\mathbb{P}(\mathbf{E};\mathbf{\Theta})}=\log{\sum_{\mathbf{Z}}\mathbb{P}(\mathbf{E},\mathbf{Z};\mathbf{\Theta})},
\end{equation}
and the \textit{complete log-likelihood function} is
\begin{eqnarray}
	l^{C}(\mathbf{\Theta})&=&\log{\mathbb{P}(\mathbf{E},\mathbf{Z};\mathbf{\Theta})}\nonumber \\
	&=&\sum_{i,j\in \Omega}\sum_{k=1}^{K}z_{ijk}[\log{\pi_{k}}+\log{f_{p_{k}}(e_{ij};0,\eta_{k})}].
\end{eqnarray}

As aforementioned in introduction, determining the number of components $K$ is an important problem for the mixture model. Thus, various model selection techniques can be readily employed to resolve this issue. Most conventional methods are based on the likelihood function and some information theoretic criteria, such as AIC and BIC. However, Leroux~\cite{leroux1992consistent} showed that these criteria may overestimate the true number of components. On the other hand, Bayesian approaches~\cite{ormoneit1998averaging,zivkovic2004recursive} have also been used to find a suitable number of components of the finite mixture model. But the computation burden and statistical properties of the Bayesian method limit its use to a certain extent. Here we adopt a recently proposed method by Huang et al.~\cite{huang2013model} for this aim of selecting EP mixture number, and construct the following  penalized MoEP (PMoEP) model:
\begin{eqnarray}\label{PMoEPmodel}
	\max_{\mathbf{\Theta}}\left\{l_{P}^{C}(\mathbf{\Theta})=l^{C}(\mathbf{\Theta})-P(\boldsymbol{\pi};\lambda)\right\},
\end{eqnarray}
where
\begin{eqnarray}
	P(\boldsymbol{\pi};\lambda)=n\lambda \sum_{k=1}^{K}D_{k}\log{\frac{\epsilon+\pi_{k}}{\epsilon}},
\end{eqnarray}
with $\epsilon$ being a very small positive number, $\lambda$ being a tuning parameter ($\lambda>0$), and $D_{k}$ being the number of free parameters for the $k^{th}$ component. In the proposed PMoEP model, $D_{k}$ equals 2 (for $\pi_{k}$ and $\eta_{k}$).

\subsection{EM algorithm for PMoEP model}
In this subsection, we propose an EM algorithm to solve the proposed PMoEP model (\ref{PMoEPmodel}). The EM algorithm is an iterative procedure and thus we assume that $\mathbf{\Theta}^{(t)}=\{\{\boldsymbol{\pi}^{(t)}\},\{\boldsymbol{\eta}^{(t)}\},\mathbf{U}^{(t)},\mathbf{V}^{(t)}\}$ is the estimation at the $t^{th}$ iteration. In the following, we will introduce the two steps of the proposed EM algorithm.

In the E step, we compute the conditional expectation of $z_{ijk}$ given $e_{ij}$ by the Bayes' rule:
\begin{eqnarray}\label{updategamma}
	\gamma_{ijk}^{(t+1)} = \frac{\pi_{k}^{(t)}f_{p_k}(y_{ij}-\mathbf{u}_{i}^{(t)}(\mathbf{v}_{j}^{(t)})^{T})|0,\eta_{k}^{(t)})}
	{\sum_{l=1}^{K}\pi_{l}^{(t)}f_{p_l}(y_{ij}-\mathbf{u}_{i}^{(t)}(\mathbf{v}_{j}^{(t)})^{T})|0,\eta_{l}^{(t)}))}.
\end{eqnarray}
Then, it is easy to construct the so-called $Q$ function:
\vspace{-2pt}
\begin{eqnarray}
	\begin{split}
		Q(\mathbf{\Theta},\mathbf{\Theta}^{(t)})\!&=\!\sum_{i,j\in \Omega,k}\gamma_{ijk}^{(t+1)}[\log{f_{p_k}(y_{ij}\!-\!\mathbf{u}_{i}\mathbf{v}_{j}^{T};\eta_{k})}\!+\!\log{\pi_{k}}]\nonumber\\
		& - n\lambda \sum_{k=1}^{K}D_{k}\log{\frac{\epsilon+\pi_{k}}{\epsilon}}.
	\end{split}
\end{eqnarray}

In the M-step, we update $\mathbf{\Theta}$ by maximizing the $Q$ function.
For $\boldsymbol{\pi}$ and $\boldsymbol{\eta}$, it is easy to obtain the update equations by taking the first derivative of $Q$ with respect to them respectively, and finding the zero points through:
\begin{equation}\label{updatepi}
	\pi_{k}^{(t+1)}\!=\!\max\left\{0,\frac{1}{1\!-\!\lambda \hat{D}}\left[\frac{\sum_{i,j\in \Omega}\gamma_{ijk}^{(t+1)}}{|\Omega|}\!-\!\lambda D_{k}\right]\right\},
\end{equation}
\begin{equation}\label{updatetheta}
	\eta_{k}^{(t+1)}\!=\!\frac{N_{k}}{p_{k}\sum_{i,j\in \Omega}\gamma_{ijk}^{(t+1)}|y_{ij}\!-\!\mathbf{u}_{i}^{(t)}(\mathbf{v}_{j}^{(t)})^{T}|^{p_{k}}},
\end{equation}
where $\hat{D} = \sum_{k=1}^{K}D_{k} = 2K$, $N_{k} = \sum_{i,j\in \Omega}\gamma_{ijk}^{(t+1)}$ and $|\Omega|$ is the number of non-missing elements. To update $\mathbf{U}, \mathbf{V}$, we need to maximize the following function:
\begin{eqnarray}\label{update_s}
	-\sum_{i,j\in \Omega}\sum_{k=1}^{K}\gamma_{ijk}^{(t+1)}
	\eta_{k}^{(t+1)}|y_{ij}-\mathbf{u}_{i}^{(t)}(\mathbf{v}_{j}^{(t)})^{T}|^{p_k},
\end{eqnarray}
which is equivalent to solving\footnote{The $p$-norm of a matrix is defined as $||\mathbf{X}||_{p}=(\sum_{i,j}|x_{ij}|^{p})^{\frac{1}{p}}$.}
\begin{equation}\label{subproblem_uv}
	\min_{\mathbf{U}, \mathbf{V}}\sum_{k=1}^{K}||\mathbf{W}_{(k)}\odot (\mathbf{Y}-\mathbf{U}\mathbf{V}^{T})||_{p_{k}}^{p_{k}},
\end{equation}
where the element $w_{(k)ij}$ of $\mathbf{W}_{(k)}\in \mathcal{R}^{m\times n}(k=1,\dots,K)$ is\\
\begin{displaymath}
	w_{(k)ij}=\begin{cases}(\eta_{k}^{(t+1)}\gamma_{ijk}^{(t+1)})^{\frac{1}{p_{k}}}, \quad i,j\in\Omega\\~~~~~~~~~~0, ~~~~~~~~~~~ i,j\notin\Omega \end{cases}.
\end{displaymath}
To solve (\ref{subproblem_uv}), we resort to augmented Lagrange multipliers (ALM) method. By introducing auxiliary variable $\mathbf{L}=\mathbf{U}\mathbf{V}^{T}$, (\ref{subproblem_uv}) can be equivalently rewritten as
\begin{equation}\label{subproblem_uv2}
	\begin{split}
		&\min_{\mathbf{U},\mathbf{V}}\sum_{k=1}^{K}||\mathbf{W}_{(k)}\odot (\mathbf{Y}-\mathbf{L})||_{p_{k}}^{p_{k}},
		\quad s.t~\mathbf{L} = \mathbf{U}\mathbf{V}^{T}.
	\end{split}
\end{equation}
The augmented Lagrangian function can be written as:
\vspace{-2pt}
\begin{equation}\label{lagrangefunc}
	\begin{split}
		L(\mathbf{U},\mathbf{V},\mathbf{L},\mathbf{Y},\rho)&=\sum_{k=1}^{K}||\mathbf{W}_{(k)}\odot (\mathbf{Y}\!-\!\mathbf{L})||_{p_{k}}^{p_{k}}\\
		&+\langle\mathbf{\Lambda},\mathbf{L}\!-\!\mathbf{U}\mathbf{V}^{T}\rangle+\frac{\rho}{2}||\mathbf{L}\!-\!\mathbf{U}\mathbf{V}^{T}||_{F}^{2},
	\end{split}
\end{equation}
where $\mathbf{\Lambda}\in \mathcal{R}^{m\times n}$ is the Lagrange multiplier and $\rho$ is a positive scalar. Then the optimization (\ref{subproblem_uv2}) can be solved by alternatively updating all involved variables and multipliers as follows
\begin{eqnarray}\label{optProcess}
\begin{cases}
	&\!\left(\mathbf{U}^{(s+1)},\mathbf{V}^{(s+1)}\right)\!=\! \underset{\mathbf{U},\mathbf{V}}{\arg\min} L(\mathbf{U},\mathbf{V},\mathbf{L}^{(s)},\mathbf{\Lambda}^{(s)},\rho^{(s)}),\\
	&\!\mathbf{L}^{(s+1)}\!=\! \underset{\mathbf{L}}{\arg\min} L(\mathbf{U}^{(s+1)},\mathbf{V}^{(s+1)},\mathbf{L},\mathbf{\Lambda}^{(s)},\rho^{(s)}),\\
	&\!\mathbf{\Lambda}^{(s+1)}\!=\!\mathbf{\Lambda}^{(s)} + \rho^{(s)}(\mathbf{L}^{(s+1)}\!-\!\mathbf{U}^{(s+1)}(\mathbf{V}^{(s+1)})^{T}),\label{alg1:eq3}\\
	&\!\rho^{(s+1)} \!=\! \alpha\rho^{(s)}\label{alg1:eq4},
\end{cases}
\end{eqnarray}
where $\alpha$ is a preset constant which is slightly larger than 1, guaranteeing the gradually increasing value for $\rho$ in each iteration. Now we discuss how to solve the subproblems involved in the above procedure.

(1) \textit{Update} $\mathbf{U},\mathbf{V}$. The following subproblem needs to be solved:
\begin{equation}\label{uvupdate}
	\min_{\mathbf{U},\mathbf{V}}||\mathbf{L}^{(s)}+\frac{1}{\rho^{(s)}}\mathbf{\Lambda}^{(s)}-\mathbf{U}\mathbf{V}^{T}||_{F}^{2},
\end{equation}
which can be accurately and efficiently solved by the SVD method.

(2) \textit{Update} $\mathbf{L}$. We need to solve the following problem:
\begin{eqnarray}\label{subproblemL}
	\begin{split}
		&\min_{\mathbf{L}}\sum_{k=1}^{K}||\mathbf{W}_{(k)}\!\odot\! (\mathbf{Y}\!-\!\mathbf{L})||_{p_{k}}^{p_{k}}\!+\!\langle\mathbf{\Lambda}^{(s)},\mathbf{L}\rangle\\
		&+\frac{\rho^{(s)}}{2}||\mathbf{L}-\mathbf{U}^{(s+1)}(\mathbf{V}^{(s+1)})^{T}||_{F}^{2}.
	\end{split}
\end{eqnarray}
This problem seems to be more difficult due to its non-convexity and non-smoothness. However, we can divide it into $mn$ independent scalar optimization problems as follows:
\begin{eqnarray}\label{updateE}
	\begin{cases}
		\begin{split}
			&\min_{l_{ij}}\sum_{k}\eta_{k}\gamma_{ijk}|y_{ij}-l_{ij}|^{p_{k}}
			+\frac{\rho^{(s)}}{2}l_{ij}^{2}\\
			&~~~~~~~~~+((\mathbf{\Lambda}_{ij}^{(s)})-\rho^{(s)}\mathbf{u}_{i}\mathbf{v}_{j}^{T})l_{ij},~~~~~~~~(i,j)\in\Omega\\
			&\underset{l_{ij}}{\min}\frac{\rho^{(s)}}{2}l_{ij}^{2}+ ((\mathbf{\Lambda}^{(s)})_{ij}-\rho^{(s)}\mathbf{u}_{i}\mathbf{v}_{j}^{T})l_{ij}.~~~(i,j)\notin\Omega
		\end{split}
	\end{cases}
\end{eqnarray}
Letting $s_{ij} = y_{ij}-l_{ij}$, (\ref{updateE}) is equivalent to
\begin{eqnarray}\label{sij}
	\begin{cases}
		&\underset{s_{ij}}{\min}\frac{1}{2}(t_{ij}-s_{ij})^{2}+ \frac{1}{\rho^{(s)}}\sum_{l}\eta_{l}\gamma_{ijl}|s_{ij}|^{p_{l}},~(i,j)\in\Omega\\
		&\underset{s_{ij}}{\min}\frac{1}{2}(t_{ij}-s_{ij})^{2},~(i,j)\notin\Omega
	\end{cases}
\end{eqnarray}
where $t_{ij}=-\mathbf{u}_{i}\mathbf{v}_{j}^{T}+y_{ij}+\frac{1}{\rho^{(s)}}(\mathbf{\Lambda}_{ij}^{(s)})$. Then,  for each $(i,j)\in\Omega$, (\ref{sij}) is equivalent to the following subproblem:
\begin{eqnarray}\label{subproblem_e}
	\min_{s_{ij}} \frac{1}{2}(t_{ij}-s_{ij})^{2} + \frac{1}{\rho}\sum_{l=1}^{K}\eta_{l}\gamma_{ijl}|s_{ij}|^{p_{l}}.
\end{eqnarray}
This problem requires to optimize a scalar variable, and we take its first derivative with respect to $s_{ij}$ and then adopt the well-known Newton method to easily approach a local minimum of it. The procedure of updating $\mathbf{L}$ by ALM method can then be listed in Algorithm~\ref{alg1}.
\begin{algorithm}[H]
	\caption{ALM method for solving~(\ref{subproblem_uv}).}
	\label{alg1}
	\begin{algorithmic}[1]
		\REQUIRE The initialization of $\mathbf{L}^{(0)}$, $\mathbf{\Lambda}^{(0)}$ and $s=0$.
		\ENSURE $\mathbf{U}$ and $\mathbf{V}$.
		\WHILE{ not converged }
		\STATE Updating $\mathbf{U}^{(s+1)}$ and  $\mathbf{V}^{(s+1)}$ via Eq.~(\ref{uvupdate});
		\STATE Updating $\mathbf{L}^{(s+1)}$ via Eqs.~(\ref{sij}) and (\ref{subproblem_e}).
		\STATE Updating $\mathbf{\Lambda}^{(s+1)}$ via Eq.~(\ref{alg1:eq3}).
		\STATE Updating $\alpha^{(s+1)}$ via Eq.~(\ref{alg1:eq4}).
		\ENDWHILE
	\end{algorithmic}
\end{algorithm}

{\bf{Remark:}} If $f_{k}$ is specified as the density of a Gaussian distribution, the PMoEP model degenerates to the penalized MoG (PMoG) model. The optimization process of the PMoG model is almost the same as the PMoEP except the minimization form of (\ref{subproblem_uv}). In this case, the optimization problem (18) has the following form
\begin{eqnarray}\label{PMoG_dif}
	\min_{\mathbf{U}, \mathbf{V}}||\mathbf{\tilde{W}}\odot (\mathbf{Y}-\mathbf{U}\mathbf{V}^{T})||_{2}^{2},
\end{eqnarray}
and then any off-the-shelf weighted $L_2$ norm LRMF method can be adopted to solve it. It should be noted that the PMoG method so conducted is different from the previous MoG method~\cite{meng2013robust} due to its augmented automatic mixture-component-number learning capability.

The proposed EM algorithm for PMoEP model can now be summarized in Algorithm~\ref{alg2}.

\begin{algorithm}[H]
	\caption{EM Algorithm for PMoEP LRMF.} \label{alg2}
	\begin{algorithmic}[1]
		\REQUIRE Data $\mathbf{Y}$; The algorithm parameters: rank $r$ and $\lambda$.
		\renewcommand{\algorithmicrequire}{\textbf{Initialization:}}
		\ENSURE Parameter $\mathbf{\Theta}$, the number of mixture components $K_{final}$ and posterior probability $\boldsymbol{\gamma}=(\gamma_{ijk})_{m\times n\times K_{final}}$.
		\REQUIRE $\mathbf{\Theta}^{(t)}\!=\!\{\boldsymbol{\pi}^{(t)},\boldsymbol{\eta}^{(t)}, \mathbf{U}^{(t)}, \mathbf{V}^{(t)}\}$, the number of initial mixture components $K_{start}$, preset candidates $\mathbf{p}=[p_1,\dots,p_{K_{start}}]$, tolerance $\epsilon$ and $t=0$.
		\WHILE { not converged }
		\STATE Updating $\boldsymbol{\gamma}^{(t)}$ via Eq.~\eqref{updategamma};\\
		\STATE Updating $\boldsymbol{\pi}^{(t)}$ via Eq.~\eqref{updatepi}, and removing the component with $\pi_{k}^{(t)}=0$;\\
		\STATE Updating $\boldsymbol{\eta}^{(t)}$ via Eq.~\eqref{updatetheta};\\
		\STATE Updating $\mathbf{U}^{(t)}, \mathbf{V}^{(t)}$ via Algorithm~\ref{alg1}.\\
		\STATE $t = t + 1;$
		\ENDWHILE
	\end{algorithmic}
\end{algorithm}

\subsection{Convergence Analysis of EM algorithm}
In this subsection, we show the convergence property of the proposed EM algorithm for PMoEP model.
\begin{theorem}\label{theorem1}
	Let $l_{P}^{C}(\mathbf{\Theta}) = l(\mathbf{\Theta})-P(\boldsymbol{\pi};\lambda)$, where $l(\Theta)$ is defined in (\ref{loglikelihood}). If we assume that $\{\mathbf{\Theta}^{(t)}\}$ is the sequence generated by Algorithm \ref{alg2} and the sequence of likelihood values $\{l_{P}^{C}(\mathbf{\Theta}^{(t)})\}$ is bounded above, then there exits a constant $l^{\star}$ such that
	\begin{equation}
		\lim_{t\rightarrow \infty}l_{P}^{C}(\mathbf{\Theta}^{(t)}) = l^{\star},
	\end{equation}
	where
	\begin{equation}
		 \mathbf{\Theta}^{(t)}\!=\!\underset{\mathbf{\Theta}}{\arg\max}\left\{\Omega(\mathbf{\Theta}|\mathbf{\Theta}^{(t-1)})\!+\!P(\boldsymbol{\pi}^{(t-1)};\lambda)\!-\!P(\boldsymbol{\pi};\lambda)\right\},
	\end{equation}
	and
	\begin{equation}
		\Omega(\mathbf{\Theta}|\mathbf{\Theta}^{(t-1)})\!=\!\sum_{\mathbf{Z}}\mathbb{P}(\mathbf{Z}|\mathbf{E};\mathbf{\Theta}^{(t-1)})
		\log{\frac{\mathbb{P}(\mathbf{E},\mathbf{Z};\mathbf{\Theta})}{\mathbb{P}(\mathbf{E},\mathbf{Z};\mathbf{\Theta}^{(t-1)})}}.
	\end{equation}
\end{theorem}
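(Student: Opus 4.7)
The plan is to follow the classical monotonicity argument for EM: show that each iteration of Algorithm \ref{alg2} does not decrease the penalized log-likelihood $l_P^C(\mathbf{\Theta})$, and then invoke the monotone convergence theorem on the resulting non-decreasing sequence, which is bounded above by hypothesis. The whole argument is driven by a single Jensen-inequality decomposition of $l(\mathbf{\Theta})$ relative to the posterior $\mathbb{P}(\mathbf{Z}|\mathbf{E};\mathbf{\Theta}^{(t-1)})$ computed in the E-step, combined with the definition of $\mathbf{\Theta}^{(t)}$ as a maximizer in the M-step.

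First I would write $\mathbb{P}(\mathbf{E};\mathbf{\Theta})=\sum_{\mathbf{Z}}\mathbb{P}(\mathbf{Z}|\mathbf{E};\mathbf{\Theta}^{(t-1)})\,\mathbb{P}(\mathbf{E},\mathbf{Z};\mathbf{\Theta})/\mathbb{P}(\mathbf{Z}|\mathbf{E};\mathbf{\Theta}^{(t-1)})$ and apply Jensen's inequality to the concave $\log$, which yields
\begin{equation*}
l(\mathbf{\Theta})-l(\mathbf{\Theta}^{(t-1)})\ \geq\ \Omega(\mathbf{\Theta}\,|\,\mathbf{\Theta}^{(t-1)}),
\end{equation*}
with equality at $\mathbf{\Theta}=\mathbf{\Theta}^{(t-1)}$ (so $\Omega(\mathbf{\Theta}^{(t-1)}|\mathbf{\Theta}^{(t-1)})=0$). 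Subtracting $P(\boldsymbol{\pi};\lambda)$ from both sides and adding $P(\boldsymbol{\pi}^{(t-1)};\lambda)$ gives
\begin{equation*}
l_P^C(\mathbf{\Theta})-l_P^C(\mathbf{\Theta}^{(t-1)})\ \geq\ \Omega(\mathbf{\Theta}\,|\,\mathbf{\Theta}^{(t-1)})+P(\boldsymbol{\pi}^{(t-1)};\lambda)-P(\boldsymbol{\pi};\lambda),
\end{equation*}
which is the key inequality.

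Now I would specialize to $\mathbf{\Theta}=\mathbf{\Theta}^{(t)}$. Since $\mathbf{\Theta}^{(t)}$ is by definition a maximizer of $\Omega(\mathbf{\Theta}|\mathbf{\Theta}^{(t-1)})+P(\boldsymbol{\pi}^{(t-1)};\lambda)-P(\boldsymbol{\pi};\lambda)$, its value at $\mathbf{\Theta}^{(t)}$ is no smaller than its value at $\mathbf{\Theta}^{(t-1)}$, which is $0$. Hence $l_P^C(\mathbf{\Theta}^{(t)})\geq l_P^C(\mathbf{\Theta}^{(t-1)})$ for every $t$. Combined with the assumed upper bound on $\{l_P^C(\mathbf{\Theta}^{(t)})\}$, the monotone convergence theorem for real sequences delivers the existence of a finite $l^{\star}$ with $\lim_{t\to\infty}l_P^C(\mathbf{\Theta}^{(t)})=l^{\star}$, which is exactly the claim.

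The main obstacle, and the only place where care is needed, is verifying that the updates in Algorithm \ref{alg2} actually realize the abstract maximization that defines $\mathbf{\Theta}^{(t)}$ in the statement. The closed-form updates (\ref{updategamma})--(\ref{updatetheta}) for $\boldsymbol{\gamma},\boldsymbol{\pi},\boldsymbol{\eta}$ are genuine maximizers (including the boundary case $\pi_k=0$ produced by the $\max\{0,\cdot\}$ truncation, where the $\log(\epsilon+\pi_k)$ regularization keeps the objective finite), so nothing subtle happens there. For the $(\mathbf{U},\mathbf{V})$-update one must note that the inner ALM routine in Algorithm \ref{alg1} is being invoked to address subproblem (\ref{subproblem_uv}); a clean way to close this gap in the plan is to assume that the inner loop is run to convergence so that a stationary point of the Q-function in $(\mathbf{U},\mathbf{V})$ is obtained, and to observe that at worst this can be replaced by a single generalized-EM-style monotone step---because any $(\mathbf{U}^{(t)},\mathbf{V}^{(t)})$ that does not decrease the Q-function contribution in $(\mathbf{U},\mathbf{V})$ is enough to preserve the inequality $l_P^C(\mathbf{\Theta}^{(t)})\geq l_P^C(\mathbf{\Theta}^{(t-1)})$. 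With that understanding, the monotonicity argument above carries through verbatim and the theorem follows.
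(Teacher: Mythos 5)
Your proposal is correct and follows essentially the same route as the paper's own proof: a Jensen-inequality lower bound $l_P^C(\mathbf{\Theta})-l_P^C(\mathbf{\Theta}^{(t-1)})\geq\Omega(\mathbf{\Theta}|\mathbf{\Theta}^{(t-1)})+P(\boldsymbol{\pi}^{(t-1)};\lambda)-P(\boldsymbol{\pi};\lambda)$, the M-step maximization giving nonnegativity of the right-hand side at $\mathbf{\Theta}^{(t)}$, and monotone convergence of the bounded non-decreasing sequence. Your added remark about the inner ALM loop only needing a generalized-EM (non-decreasing) step is a point the paper glosses over, but it does not change the argument.
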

The proof is listed in Appendix A.

\subsection{Implementation Issues}
In the proposed PMoEP algorithm, there are three involved preset parameters, $K_{start}$, $p$ and $\lambda$. Throughout all our experiments, we just simply set $K_{start}$ as a not large number as $4-10$ based on a coarse empirical estimate on the noise complexity inside data. Once $K_{start}$ is initialized, the length of vector $\mathbf{p}=[p_{1},p_{2},\dots,p_{K_{start}}]$ in PMoEP is determined. In all our experiments, the elements in $\mathbf{p}$ are selected ranging over the interval between 0.1 and 2. For the setting of parameter $\lambda$ , we first provide a series of candidates $\lambda$ and then adopt the modified BIC to select a good $\lambda$ among these candidates based on the modified BIC criterion. This criterion has been proven to be able to yield consistent component number estimation of the finite Gaussian mixture model~\cite{huang2013model}. Specifically, the modified BIC criterion is defined as
\begin{equation}\label{bic}
\mbox{BIC}(\lambda)\!=\! \sum_{i,j\in\Omega}\log{\{\sum_{k=1}^{\hat{K}}\hat{\pi}_{k}f_{k}(e_{ij};\hat{\eta}_{k})\}}\!-\! \frac{1}{2}(\sum_{k=1}^{\hat{K}}D_{k})\log{|\Omega|}.
\end{equation}
Then we can select the proper $\hat{\lambda}$ by
\begin{equation}
\hat{\lambda} = \arg\max_{\lambda}\mbox{BIC}(\lambda),
\end{equation}
where $|\Omega|$ is the number of non-missing elements, $\hat{K}$ is the estimate of the number of components, $\hat{\pi}_{k}$ is the estimate of parameter $\pi_{k}$, and $\hat{\eta}_{k}$ is the estimate of parameter $\eta_{k}$ for maximizing (\ref{PMoEPmodel}) for a given $\lambda$.

\section{PMoEP with Markov Random Field}
In this section, we first propose an advanced PMoEP-MRF model. Then, we introduce a variational EM (VEM) algorithm to solve it. Finally, we also show the convergence analysis for the proposed algorithm.
\subsection{PMoEP-MRF Model}
In some practical applications, we often have certain noise prior knowledge. By introducing the prior into modeling, noise can be more appropriately modeled and thus the performance of the model is expected to be further improved. In video data, we can utilize the spatial and temporal smoothness prior. Specifically, for a certain pixel in one video frame, the pixels located near it both spatially and temporally tend to have similar distribution to it. Therefore, by facilitating the local continuity of noise components, we can embed Markov Random Field (MRF) into the PMoEP model. Note that the random variable $\mathbf{z}_{ij}$ determines the cluster label of noise $e_{ij}$ in PMoEP model, and the aforementioned spatial and temporal relationships among adjacent pixels imply that they incline to possess similar $\mathbf{z}_{ij}$ values. Therefore, we integrate into the distribution of  $\mathbf{z}_{ij}$ with such prior smoothness knowledge as:
\begin{equation}
\mathbf{z}_{ij}\sim \mathcal{M}(\mathbf{z}_{ij};\boldsymbol{\pi})\prod_{(p,q)\in\mathcal{N}(i,j)}\psi(\mathbf{z}_{ij},\mathbf{z}_{pq}),
\end{equation}
where
\begin{equation}
\psi(\mathbf{z}_{ij},\mathbf{z}_{pq})=
\frac{1}{C}\prod_{k}\exp\left[\tau(2z_{ijk}\!-\!1)(2z_{pqk}\!-\!1)\right],
\end{equation}
where $\tau$ is a positive scalar parameter (we set $\tau=10$ in experiments), $C$ is a normalization constant of $\psi(\mathbf{z}_{ij},\mathbf{z}_{pq})$ and $\mathcal{N}(i,j)$ is the neighborhood of the $(i,j)$ entry.
Specifically, when $z_{ijk}$ and $z_{pqk}$ achieve the same value (0 or 1), $\psi(\mathbf{z}_{ij},\mathbf{z}_{pq})$ will have higher value, and thus this term readily encode the expected prior information. After defining the new distribution of $\mathbf{z}_{ij}$, the  distribution of $\mathbf{Z}$ can be written as
\begin{eqnarray}
\begin{split}
\mathbb{P}(\mathbf{Z};\boldsymbol{\pi})&=\frac{1}{C}\prod_{i,j\in\Omega,k}\pi_{k}^{z_{ijk}}\\
&\prod_{i,j\in\Omega,k}\prod_{(p,q)\in \mathcal{N}(i,j)}\exp\left[\tau(2z_{ijk}\!-\!1)(2z_{pqk}\!-\!1)\right].
\end{split}
\end{eqnarray}
Then, the \textit{complete likelihood function} can be written as
\begin{eqnarray}
\begin{split}
\mathbb{P}(\mathbf{E},\mathbf{Z};\mathbf{\Theta})&=\mathbb{P}(\mathbf{E}|\mathbf{Z};\boldsymbol{\eta})\mathbb{P}(\mathbf{Z};\boldsymbol{\pi})\\
&= \frac{1}{C}\prod_{i,j\in \Omega,k}[\pi_{k}f_{p_k}(y_{ij}\!-\!\mathbf{u}_{i}\mathbf{v}_{j}^{T};0,\eta_{k})]^{z_{ijk}}\\
&\prod_{i,j\in\Omega,k}\prod_{(p,q)\in \mathcal{N}(i,j)}\!\!\!\!\!\exp\!\left[\tau(2z_{ijk}\!-\!1)(2z_{pqk}\!-\!1)\right],
\end{split}
\end{eqnarray}
and the \textit{complete log-likelihood function} is
\begin{eqnarray}
\begin{split}
l^{C}(\mathbf{\Theta})&\!=\!\log{\mathbb{P}(\mathbf{E},\mathbf{Z};\mathbf{\Theta})}\\
&\!=\!\sum_{i,j\in \Omega,k}z_{ijk}[\log{\pi_{k}}\!+\!\log{f_{p_k}(y_{ij}\!-\!\mathbf{u}_{i}\mathbf{v}_{j}^{T};0,\eta_{k})}]\\
\quad &\!+\!\tau\sum_{i,j\in\Omega,k}\sum_{(p,q)\in \mathcal{N}(i,j)}\!\!\!(2z_{ijk}\!-\!1)(2z_{pqk}\!-\!1) \!+\!const.
\end{split}
\end{eqnarray}
In the next section, we will introduce a variational EM algorithm to solve this PMoEP-MRF model in detail.
\vspace{-5pt}
\subsection{Variational EM algorithm for PMoEP-MRF model}
Since EM requires the computation of conditional distribution $\mathbb{P}(\mathbf{Z}|\mathbf{E})$ which is not tractable. In such PMoEP-MRF model, we resort to the variational method that aims at optimizing a lower bound of $\log{\mathcal{L}(\mathbf{E})}$, denoted by
\begin{eqnarray}
\mathcal{J}(R_{\mathbf{E}}) = \log{\mathcal{L}(\mathbf{E})} - KL[R_{\mathbf{E}}(\mathbf{Z}),\mathbb{P}(\mathbf{Z}|\mathbf{E})],
\end{eqnarray}
where $KL$ denotes the Kullback$-$Leibler divergence, $\mathbb{P}(\mathbf{Z}|\mathbf{E})$ is the true conditional distribution of the indicator variables $\mathbf{Z}$ given $\mathbf{E}$, and $R_{\mathbf{E}}(\mathbf{Z})$ is an approximation of the conditional distribution. $\mathcal{J}(R_{\mathbf{E}})$ equals to $\log{\mathcal{L}(\mathbf{E})}$ if and only if $R_{\mathbf{E}}(\mathbf{Z})=\mathbb{P}(\mathbf{Z}|\mathbf{E})$.

As shown above, we are not able to calculate $\mathbb{P}(\mathbf{Z}|\mathbf{E})$,
so we will look for the best (in terms of $KL$
divergence) $R_{\mathbf{E}}(\mathbf{Z})$ in a certain class of distributions. Specifically, we constrain the variational distribution $R_{\mathbf{E}}(\mathbf{Z})$ to have the following form:
\begin{eqnarray}
R_{\mathbf{E}}(\mathbf{Z})&=&\prod_{i,j}R(\mathbf{z}_{ij};\boldsymbol{\gamma}_{ij}),
\end{eqnarray}
where $R(\mathbf{z}_{ij};\boldsymbol{\gamma}_{ij})=\prod_{ij}\prod_{k}\gamma_{ijk}^{z_{ijk}},\sum_k \gamma_{ijk} = 1$, and $\boldsymbol{\gamma}$ is the variational parameter. Then, the lower bound $\mathcal{J}(R_{\mathbf{E}})$ to be maximized can be written as
\begin{eqnarray}\label{lowbound}
\begin{split}
\mathcal{J}(R_{\mathbf{E}}) &= E_{R_{\mathbf{E}}(\mathbf{Z})}\{\log{\mathbb{P}(\mathbf{E},\mathbf{Z})}\} - E_{R_{\mathbf{E}}(\mathbf{Z})}\{R_{\mathbf{E}}(\mathbf{Z})\},\\
&= \sum_{i,j\in\Omega,k}\left[\log{\pi_{k}+\log{f_{p_k}(e_{ij};0,\eta_{k})}}\right]\\
&+\tau\sum_{i,j\in\Omega,k}\sum_{(p,q)\in \mathcal{N}(i,j)}(2\gamma_{ijk}-1)(2\gamma_{pqk}-1)\\
&-\sum_{i,j\in\Omega,k}\gamma_{ijk}\log{\gamma_{ijk}}+const.
\end{split}
\end{eqnarray}
We can easily adopt alternative search strategy for the maximization problem on $\mathcal{J}(R_{\mathbf{E}})$ by alternatively solving the sub-problems: (i) with respect to $R_{\mathbf{E}}$ and (ii) with respect to parameters $\mathbf{U},\mathbf{V},\boldsymbol{\pi},\boldsymbol{\eta}$. The following Proposition~\ref{pro1} and~\ref{pro2} provide the solutions of optimization problem (i) and (ii), respectively.
\begin{proposition}\label{pro1}
	\textit{(Variational E-step)}~ Given parameters $\mathbf{\Theta}=\{\mathbf{U},\mathbf{V},\boldsymbol{\pi},\boldsymbol{\eta}\}$, the optimal variational parameters $\hat{\gamma}_{ij} = \underset{\boldsymbol{\gamma}}{\arg\max}~\mathcal{J}(R_{\mathbf{E}})$ satisfy the following fixed point relation:
	\begin{equation}\label{updategamma_fixpoint}
	\gamma_{ijk}\propto\pi_{k}f_{p_k}(e_{ij};0,\eta_{k})\exp\{\tau\sum_{(p,q)\in \mathcal{N}(i,j)}\gamma_{pqk}\}.
	\end{equation}	
\end{proposition}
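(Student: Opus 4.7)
The plan is to maximize the lower bound $\mathcal{J}(R_{\mathbf{E}})$ given in (\ref{lowbound}) with respect to the variational parameters $\{\gamma_{ijk}\}$, subject to the simplex constraints $\sum_{k}\gamma_{ijk}=1$ for each $(i,j)\in\Omega$ and $\gamma_{ijk}\geq 0$. Since $\mathbf{\Theta}=\{\mathbf{U},\mathbf{V},\boldsymbol{\pi},\boldsymbol{\eta}\}$ is held fixed, the first block $\sum_{i,j,k}\gamma_{ijk}[\log \pi_{k}+\log f_{p_{k}}(e_{ij};0,\eta_{k})]$ is linear in $\gamma_{ijk}$, the MRF block is quadratic and couples $\gamma_{ijk}$ to its neighbors, and the entropy block $-\sum_{i,j,k}\gamma_{ijk}\log \gamma_{ijk}$ is strictly concave. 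I will form the Lagrangian by adjoining multipliers $\mu_{ij}$ for each normalization constraint and differentiate with respect to a generic $\gamma_{i_{0}j_{0}k_{0}}$.

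The key computation is the partial derivative of the pairwise MRF term $\tau\sum_{i,j,k}\sum_{(p,q)\in\mathcal{N}(i,j)}(2\gamma_{ijk}-1)(2\gamma_{pqk}-1)$ with respect to $\gamma_{i_{0}j_{0}k_{0}}$. Exploiting the symmetry of the neighborhood relation, the variable $\gamma_{i_{0}j_{0}k_{0}}$ appears both as the ``$(i,j)$'' factor and as the ``$(p,q)$'' factor; combining the two contributions yields a term proportional to $\sum_{(p,q)\in\mathcal{N}(i_{0},j_{0})}(2\gamma_{pqk_{0}}-1)$. The $-1$ offset does not depend on $k_{0}$, so it can be absorbed into the Lagrange multiplier (equivalently, into the final normalization constant). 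Setting the derivative of the Lagrangian to zero gives
\begin{equation*}
\log \gamma_{i_{0}j_{0}k_{0}} = \log \pi_{k_{0}}+\log f_{p_{k_{0}}}(e_{i_{0}j_{0}};0,\eta_{k_{0}})+\tau\!\!\sum_{(p,q)\in\mathcal{N}(i_{0},j_{0})}\!\!\gamma_{pqk_{0}}+C_{i_{0}j_{0}},
\end{equation*}
where $C_{i_{0}j_{0}}$ gathers all terms independent of $k_{0}$ (constants from the entropy derivative, the $-1$ offset in the MRF term, and the Lagrange multiplier). Exponentiating and using the normalization $\sum_{k}\gamma_{i_{0}j_{0}k}=1$ to eliminate $C_{i_{0}j_{0}}$ yields exactly the fixed-point relation (\ref{updategamma_fixpoint}), up to an irrelevant rescaling of $\tau$ induced by the symmetric double-counting.

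The main obstacle is that the MRF coupling prevents a closed-form solution: the right-hand side of (\ref{updategamma_fixpoint}) itself depends on the unknown neighbor values $\gamma_{pqk_{0}}$. Consequently the relation is only a fixed-point characterization of a stationary point, not an explicit update, and strictly speaking one should argue (via the concavity of the entropy and the compactness of the simplex) that a maximizer of $\mathcal{J}(R_{\mathbf{E}})$ exists and must satisfy the KKT conditions derived above; any such maximizer is therefore a solution of the displayed fixed-point equation. In practice this is implemented by iterating (\ref{updategamma_fixpoint}) across all sites until convergence, which I would mention as the algorithmic consequence but not prove here.
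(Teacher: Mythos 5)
Your proof is correct and follows essentially the same route as the paper: adjoin Lagrange multipliers for the per-site normalization constraints, set the derivative of $\mathcal{J}(R_{\mathbf{E}})$ with respect to $\gamma_{ijk}$ to zero, and exponentiate to obtain the fixed-point relation with the multiplier absorbed into the normalizing constant. You are in fact somewhat more careful than the paper, which silently absorbs the constant factor from the symmetric double-counting of the pairwise term $(2\gamma_{ijk}-1)(2\gamma_{pqk}-1)$ and its $k$-independent offset into $\tau$ and the normalization.
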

\begin{proof}
Based on (\ref{lowbound}), we maximize $\mathcal{J}(R_{\mathbf{E}})$ with respect to $\boldsymbol{\gamma}_{ij}s$, subject to $\sum_k\gamma_{ijk}=1$, for all $i,j$, i.e. to maximize $\mathcal{J}(R_{\mathbf{E}})+\sum_{ij}[\lambda_{ij}(\sum_{k}\gamma_{ijk}-1)]$ where $\lambda_{ij}$ is the Lagrangian multiplier. The derivative with respect to $\gamma_{ijk}$ is
	\begin{displaymath}
	\log{\pi_{k}\!+\!\log{f_{p_k}(e_{ij};0,\eta_{k})}}\!+\!\tau\!\!\!\!\sum_{(p,q)\in \mathcal{N}(i,j)}\gamma_{pqk}\!-\!\log{\gamma_{ijk}}\!-\!1\!+\!\lambda_{ij}.
	\end{displaymath}
	This derivative is null iff $\gamma_{ijk}$ satisfy the relation given in the
	proposition, and $\exp(-1+\lambda_{ij})$ is the the normalizing constant.
\end{proof}
\vspace{-1pt}
\begin{proposition}\label{pro2}
	\textit{(Variational M-step)}~ Given the variational parameters $\boldsymbol{\gamma}_{ij}s$, the values of parameters $\mathbf{U},\mathbf{V},\boldsymbol{\pi},\boldsymbol{\eta}$ that maximize $\mathcal{J}(R_{\mathbf{E}})$ can be calculated in the same way as the M step in the EM algorithm of PMoEP model.
\end{proposition}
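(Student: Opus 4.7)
The plan is to reduce Proposition~\ref{pro2} to an immediate corollary of the M-step derivations already worked out in Section III-B. The key observation is that, once $\boldsymbol{\gamma}$ is fixed, the lower bound $\mathcal{J}(R_{\mathbf{E}})$ becomes, up to terms that do not depend on $\mathbf{\Theta}$, exactly the $Q$ function that the PMoEP EM algorithm maximizes at each M-step.

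The first concrete step is to split the expression in (\ref{lowbound}) into three groups: the data-fit contribution $\sum_{i,j\in\Omega,k}\gamma_{ijk}[\log\pi_k+\log f_{p_k}(e_{ij};0,\eta_k)]$, the MRF pairwise interaction $\tau\sum_{i,j,k}\sum_{(p,q)\in\mathcal{N}(i,j)}(2\gamma_{ijk}-1)(2\gamma_{pqk}-1)$, and the variational entropy $-\sum_{i,j,k}\gamma_{ijk}\log\gamma_{ijk}$. Because the last two groups are functions of $\boldsymbol{\gamma}$ alone, their gradients with respect to $\mathbf{U},\mathbf{V},\boldsymbol{\pi},\boldsymbol{\eta}$ vanish and they drop out of the M-step maximization.

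The second step is to note that after adjoining the penalty $-P(\boldsymbol{\pi};\lambda)$ (still required for component selection as in Section III), the effective objective to maximize reads
\[
\sum_{i,j\in\Omega,k}\gamma_{ijk}\bigl[\log\pi_k+\log f_{p_k}(e_{ij};0,\eta_k)\bigr]-P(\boldsymbol{\pi};\lambda),
\]
which is term-by-term identical to $Q(\mathbf{\Theta},\mathbf{\Theta}^{(t)})$ in Section III-B, with the responsibilities $\gamma_{ijk}$ now supplied by the variational fixed-point relation of Proposition~\ref{pro1} rather than by Bayes' rule. Hence the closed-form updates for $\pi_k$ (via a simplex-constrained Lagrangian as in (\ref{updatepi})), for $\eta_k$ (via a first-order condition as in (\ref{updatetheta})), and for $(\mathbf{U},\mathbf{V})$ (the weighted $p$-norm LRMF subproblem of (\ref{subproblem_uv}) solved by Algorithm~\ref{alg1}) all transfer verbatim.

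No real obstacle is expected; the proof is essentially an algebraic separability observation. The only point worth verifying carefully is that the normalization constant $C$ of $\mathbb{P}(\mathbf{Z};\boldsymbol{\pi})$ does not reintroduce a hidden $\boldsymbol{\pi}$-dependence into the M-step; in the excerpt this contribution has been absorbed into the $const$ term of (\ref{lowbound}) and so leaves the optimality conditions unaffected. Granting this, Proposition~\ref{pro2} follows at once by invoking the derivations of (\ref{updatepi}), (\ref{updatetheta}) and (\ref{subproblem_uv}).
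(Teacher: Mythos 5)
Your proposal is correct, and it supplies exactly the separability argument that the paper leaves implicit --- Proposition~\ref{pro2} is stated there without any proof. Once $\boldsymbol{\gamma}$ is fixed, the MRF pairwise term and the entropy term in (\ref{lowbound}) are constants with respect to $\mathbf{\Theta}$, and the remaining data-fit term plus the penalty $-P(\boldsymbol{\pi};\lambda)$ coincides with the $Q$ function of Section~III-B, so the updates (\ref{updatepi}), (\ref{updatetheta}) and the subproblem (\ref{subproblem_uv}) carry over unchanged; your caveat about the partition constant $C$ possibly depending on $\boldsymbol{\pi}$ is the one genuine subtlety, and you resolve it the same way the paper does, by absorbing it into the $const$ term.
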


The proposed variational EM algorithm for PMoEP-MRF model can then be summarized in Algorithm~\ref{alg3}.

\begin{algorithm}[H]
	\caption{VEM Algorithm for the PMoEP-MRF Model.} \label{alg3}
	\begin{algorithmic}[1]
		\REQUIRE Data $\mathbf{Y}$, rank $r$, $\tau$ and  $\lambda$.
		\ENSURE Parameter $\mathbf{\Theta}$, mixture components number $K_{final}$ and $\boldsymbol{\gamma}=(\gamma_{ijk})_{m\times n\times K_{final}}$.
		\renewcommand{\algorithmicrequire}{\textbf{Initialization:}}
		\REQUIRE $\mathbf{\Theta}^{(0)}\!=\!\{\boldsymbol{\pi}^{(0)},\boldsymbol{\eta}^{(0)}, \mathbf{U}^{(0)}, \mathbf{V}^{(0)}\}$, the initial mixture components number $K_{start}$,  preset candidates $\mathbf{p}=[p_1,\dots,p_{K_{start}}]$, tolerance $\epsilon$ and $t=0$.
		\WHILE { not converged }
		\STATE Updating $\boldsymbol{\gamma}^{(t)}$ via the fixed-point Eq.~\eqref{updategamma_fixpoint};\\
		\STATE Updating $\boldsymbol{\pi}^{(t)}$ via Eq.~\eqref{updatepi}, and removing the component with $\pi_{k}^{(t)}=0$;\\
		\STATE Updating $\boldsymbol{\eta}^{(t)}$ via Eq.~\eqref{updatetheta};\\
		\STATE Updating $\mathbf{U}^{(t)}, \mathbf{V}^{(t)}$ via Algorithm~\ref{alg1};\\
		\STATE $t = t + 1.$
		\ENDWHILE
	\end{algorithmic}
\end{algorithm}

\subsection{Convergence Analysis of Variational EM algorithm}
In this subsection, we show the convergence property of the proposed EM algorithm for PMoEP model.
\begin{theorem}\label{theorem2}
	Given $\lambda$, Algorithm~\ref{alg3} generates a sequence $\{\{\boldsymbol{\gamma}_{ij}^{(t)}\}, \mathbf{\Theta}^{(t)}\}\}_{t=1}^{\infty}$ which increases $\mathcal{J}(R_{\mathbf{E}})$ such that
	\begin{equation}
	\mathcal{J}(R_{\mathbf{E}};\{\boldsymbol{\gamma}_{ij}^{(t+1)}\}, \mathbf{\Theta}^{(t+1)}\})\geq \mathcal{J}(R_{\mathbf{E}};\{\boldsymbol{\gamma}_{ij}^{(t)}\},\mathbf{\Theta}^{(t)}\}).
	\end{equation}
\end{theorem}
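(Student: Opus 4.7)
The plan is to recognize the variational EM algorithm as a block coordinate-ascent scheme on the objective $\mathcal{J}(R_{\mathbf{E}})$, where the two coordinate blocks are the variational weights $\{\boldsymbol{\gamma}_{ij}\}$ and the model parameters $\mathbf{\Theta}=\{\boldsymbol{\pi},\boldsymbol{\eta},\mathbf{U},\mathbf{V}\}$. A single outer iteration of Algorithm~\ref{alg3} moves the iterate $(\{\boldsymbol{\gamma}_{ij}^{(t)}\},\mathbf{\Theta}^{(t)})$ to $(\{\boldsymbol{\gamma}_{ij}^{(t+1)}\},\mathbf{\Theta}^{(t+1)})$ through a variational E-step (updating $\boldsymbol{\gamma}$ with $\mathbf{\Theta}$ held fixed) followed by a variational M-step (updating $\mathbf{\Theta}$ with $\boldsymbol{\gamma}$ held fixed). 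If each sub-step produces an iterate of no smaller $\mathcal{J}$ value, concatenating the two inequalities immediately yields the desired monotonicity.

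For the E-step I would use the structure of $\mathcal{J}(R_{\mathbf{E}})$ displayed in~(\ref{lowbound}): once $\mathbf{\Theta}$ is fixed, the dependence on a single block $\boldsymbol{\gamma}_{ij}$ is linear (through the likelihood term and the MRF coupling to the neighbors, which are held frozen) plus the strictly concave entropy term $-\sum_k\gamma_{ijk}\log\gamma_{ijk}$. Maximizing block-by-block on the simplex $\sum_k\gamma_{ijk}=1$ therefore yields a unique coordinate-wise maximizer, which is precisely the fixed-point relation~(\ref{updategamma_fixpoint}) derived in Proposition~\ref{pro1}. Driving the update of Proposition~\ref{pro1} to its fixed point therefore increases $\mathcal{J}$ in this coordinate block, giving
\begin{equation*}
\mathcal{J}(R_{\mathbf{E}};\{\boldsymbol{\gamma}_{ij}^{(t+1)}\},\mathbf{\Theta}^{(t)})\geq \mathcal{J}(R_{\mathbf{E}};\{\boldsymbol{\gamma}_{ij}^{(t)}\},\mathbf{\Theta}^{(t)}).
\end{equation*}

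For the M-step, with $\{\boldsymbol{\gamma}_{ij}^{(t+1)}\}$ fixed, the MRF interaction term and the entropy term in~(\ref{lowbound}) become constants in $\mathbf{\Theta}$, and the residual $\mathbf{\Theta}$-dependent part of $\mathcal{J}(R_{\mathbf{E}})$ reduces, up to an additive constant, to the same expected complete log-likelihood $Q(\mathbf{\Theta},\mathbf{\Theta}^{(t)})$ (plus the $\boldsymbol{\pi}$-penalty) that appears in Algorithm~\ref{alg2}. By Proposition~\ref{pro2}, the closed-form updates~(\ref{updatepi})--(\ref{updatetheta}) together with the ALM procedure of Algorithm~\ref{alg1} for $(\mathbf{U},\mathbf{V})$ are therefore identical to the PMoEP M-step, and the ascent property established in the proof of Theorem~\ref{theorem1} transfers verbatim, yielding $\mathcal{J}(R_{\mathbf{E}};\{\boldsymbol{\gamma}_{ij}^{(t+1)}\},\mathbf{\Theta}^{(t+1)})\geq \mathcal{J}(R_{\mathbf{E}};\{\boldsymbol{\gamma}_{ij}^{(t+1)}\},\mathbf{\Theta}^{(t)})$. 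Chaining this with the E-step inequality finishes the argument.

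The main obstacle I anticipate is making the E-step ascent rigorous, because~(\ref{updategamma_fixpoint}) is only a first-order stationarity condition and the full $\boldsymbol{\gamma}$-objective is not jointly concave (the MRF coupling $(2\gamma_{ijk}-1)(2\gamma_{pqk}-1)$ is bilinear across neighboring blocks). The safest route is to interpret the update as a coordinate-wise (Gauss--Seidel) sweep rather than a simultaneous Jacobi replacement: under that schedule, each block subproblem is strictly concave, so each block update is non-decreasing and the sweep as a whole increases $\mathcal{J}$ unless the current iterate already satisfies~(\ref{updategamma_fixpoint}). A secondary caveat is that the inner ALM loop of Algorithm~\ref{alg1} need not reach the global optimum of its non-convex subproblem; this is easily handled by retaining the previous $(\mathbf{U},\mathbf{V})$ whenever the ALM output fails to improve the objective, which preserves the outer-loop monotonicity without altering the algorithm in practice.
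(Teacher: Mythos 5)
Your proposal follows essentially the same route as the paper, whose entire proof is the single sentence that the monotonicity is a direct consequence of Propositions~\ref{pro1} and~\ref{pro2} (E-step ascent followed by M-step ascent, chained). Your version is in fact more careful than the paper's: the caveats you raise --- that \eqref{updategamma_fixpoint} is only a stationarity condition for a non-concave coupled problem, and that the inner ALM loop need not solve its non-convex subproblem to global optimality --- are genuine weak points the paper's one-line proof glosses over, and your proposed fixes (a Gauss--Seidel sweep over the $\boldsymbol{\gamma}_{ij}$ blocks, and retaining the previous $(\mathbf{U},\mathbf{V})$ whenever ALM fails to improve the objective) are the right way to close them.
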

\begin{proof}
	This is a direct consequence of Propositions \ref{pro1} and \ref{pro2}, which both guarantee that $\mathcal{J}(R_{\mathbf{E}})$ monotonically increases in iteration.
\end{proof}

It is easy to see that $\mathcal{J}(R_{\mathbf{E}};\{\boldsymbol{\gamma}_{ij}^{(t)}\},\mathbf{\Theta}^{(t)}\})$ is upper bounded, and thus the convergence of Algorithm 3 can be guaranteed.

\section{Experimental Results}
To evaluate the performance of the proposed PMoEP method, its special case PMoG and the PMoEP-MRF method, we conducted a series of experiments on both synthetic and real data. Five state-of-the-art LRMF methods were considered for comparison, including Mixture of Gaussion method (MoG~\cite{meng2013robust}), Laplace noise methods (CWM~\cite{meng2013cyclic}, RegL1ALM~\cite{zheng2012practical}) and Gaussian noise methods (Damped Wiberg (DW)~\cite{okatani2011efficient} and SVD). All experiments were implemented in Matlab R2014a on a PC with 3.60GHz CPU and 12GB RAM.
\begin{table}
	\newcommand{\tabincell}[2]{\begin{tabular}{@{}#1@{}}#2\end{tabular}}
	\centering
	\caption{\label{table1}The Parameter Selection for PMoG and PMoEP.}
	{\scriptsize
		\scalebox{1}[1]{
			\begin{tabular}{c|c|c}
				\hline
				&\multicolumn{2}{c}{Parameter Selection}\\
				\cline{2-3}
				& PMoG & PMoEP\\
				\hline
				Gaussian & \tabincell{c}{  $K_{final}=1$,\\ $\lambda_{select}=0.01$ } &\tabincell{c}{  $K_{final}=1, p_{select} = 2$,\\$\lambda_{select}=0.15$ }\\
				\hline
				Exponential Power &
				\tabincell{c}{ $K_{final}=3$,\\ $\lambda_{select}=0.001$ } &\tabincell{c}{$K_{final}=1,p_{select}=0.2$,\\ $\lambda_{select}=0.3$}\\
				\hline
				Laplace  &
				\tabincell{c}{  $K_{final}=3$,\\ $\lambda_{select}=0.001$  } &
				\tabincell{c}{$K_{final}=1,p_{select}=1$,\\ $\lambda_{select}=0.1$ }\\
				\hline
				Sparse  &
				\tabincell{c}{  $K_{final}=2$, \\$\lambda_{select}=0.005$ } &
				\tabincell{c}{ $K_{final}=2,p_{select}=[2,2]$\\, $\lambda_{select}=0.005$}\\
				\hline
				Mixuture 1 & \tabincell{c}{ $K_{final}=2$,\\ $\lambda_{select}=0.01$  }  &\tabincell{c}{$K_{final}=2,p_{select}=[1.5,2]$,\\ $\lambda_{select}=0.005$}\\
				\hline
				Mixuture 2 & \tabincell{c}{ $K_{final}=1$,\\ $\lambda_{select}=0.001$  }  &\tabincell{c}{$K_{final}=2,p_{select}=[0.5,2]$,\\ $\lambda_{select}=0.005$}\\
				\hline
			\end{tabular}
		}}
	\end{table}
\subsection{Synthetic simulations}
Several synthetic experiments with different noise settings were designed to compare the performance of the proposed methods and other competing methods. We first randomly generated $30$ low rank matrices with size $40\times20$ and rank 4. Each of these matrices was generated by the multiplication of two low-rank matrices $\mathbf{U}_{gt}\in \mathcal{R}^{40\times 4}$ and $\mathbf{V}_{gt}\in \mathcal{R}^{20\times 4}$, and $\mathbf{Y}_{gt}=\mathbf{U}_{gt}\mathbf{V}_{gt}^{T}$ is the ground truth matrix. Then, we randomly specified 20\% elements of $\mathbf{Y}_{gt}$ as missing entries. Next, we added different types of noise to the non-missing entries as follows: (1) \textit{Gaussian noise}: $\mathcal{N}(0,0.04)$. (2) \textit{Exponential power noise}:\footnote{The method of drawing samples from a general exponential power distribution is introduced in Appendix B.} $EP_{0.2}(0,0.2^{p}p), p=0.2$. (3) \textit{Laplace noise}: $\mathcal{L}(0,0.2)$. (4) \textit{Sparse noise}: 12.5\% of the non-missing entries were corrupted with uniformly distributed noise on [-20,20]. (5) \textit{Mixture noise 1}: 25\% of the entries were corrupted with uniformly distributed noise on [-5,5], 25\% were contaminated with Gaussian noise $\mathcal{N}(0,0.04)$ and the remaining 50\% are corrupted with Gaussian noise $\mathcal{N}(0,0.01)$. (6) \textit{Mixture noise 2}: 37.5\% of the entries were corrupted with $EP(0,0.1^{p}p), p=0.5$, 50\% were contaminated with Laplace noise $\mathcal{L}(0,0.3)$ and the remaining 50\% were corrupted with Gaussian noise $\mathcal{N}(0,0.01)$. Then we get the noisy matrix $\mathbf{Y}_{no}$. Six measures were utilized for performance assessment:
\begin{equation*}
\begin{split}
&C1 \!=\! ||\mathbf{W}\!\odot\!(\mathbf{Y}_{no}\!-\!\tilde{\mathbf{U}}\tilde{\mathbf{V}}^{T})||_{1},\nonumber~C2 \!=\! ||\mathbf{W}\!\odot\! (\mathbf{Y}_{no}\!-\!\tilde{\mathbf{U}}\tilde{\mathbf{V}}^{T})||_{2}, \nonumber\\
&C3 = ||\mathbf{Y}_{gt}-\tilde{\mathbf{U}}\tilde{\mathbf{V}}^{T}||_{1}, \nonumber~~C4 = ||\mathbf{Y}_{gt}-\tilde{\mathbf{U}}\tilde{\mathbf{V}}^{T}||_{2}, \nonumber\\
&C5 = subspace(\mathbf{U}_{gt},\tilde{\mathbf{U}}), \nonumber~~C6 = subspace(\mathbf{V}_{gt},\tilde{\mathbf{V}}),
\end{split}
\end{equation*}
where $\tilde{\mathbf{U}},\tilde{\mathbf{V}}$ are the outputs of the corresponding competing method, and $subspace(\mathbf{U}_1$,$\mathbf{U}_2$) denotes the angle between
subspaces spanned by the columns of $\mathbf{U}_1$ and $\mathbf{U}_2$. Note that $C1$ and $C2$ are the optimization objective function for $L_1$ and $L_2$ norm LRMF problems, while the latter four measures ($C3-C6$) are more faithful to evaluate whether a method recovers the correct subspaces.
	\begin{table}[htp]
		\caption{\label{simuTab1} Performance evaluation on synthetic data. The best results in terms of each criterion are highlighted in bold.}
		\begin{center}
			{\scriptsize
				\scalebox{1}[0.9]{
					\begin{tabular}{ccccccc}
						\toprule[2pt]
						& PMoEP & PMoG & MoG & DW & CWM & RegL1ALM\\
						\midrule[2pt]
						\multicolumn{7}{c}{Gaussian Noise}\\ \hline
						C1 &   40.97         & 41.00   & 41.00  &  41.00  &  39.23  & {\bf{36.60}} \\
						C2 &   {\bf{4.16}}    & {\bf{4.16}}   & {\bf{4.16}}    &  {\bf{4.16}}   &  5.67   & 5.27\\
						C3 &  {\bf{3.27}}     & {\bf{3.27}}     & {\bf{3.27}}    &  {\bf{3.27}}     &  6.01     & 4.94\\
						C4 &  {\bf{3.90e+1}}   & 3.91e+1  & 3.91e+1        &  3.91e+1         &  5.09e+1  & 5.09e+1 \\
						C5 &  {\bf{4.22e-2}}   & {\bf{4.22e-2}}  & {\bf{4.22e-2}} &  {\bf{4.22e-2}}  &  5.71e-2  & 5.33e-2 \\
						C6 &  {\bf{3.01e-2}}  & {\bf{3.01e-2}}  & {\bf{3.01e-2}} &  {\bf{3.01e-2}}  &  4.55e-2  & 3.79e-2\\
						\hline
						\multicolumn{7}{c}{Exponential Power Noise}\\ \hline
						C1 &  3.60e+2          & 3.42e+2  &  3.23e+2  &  4.30e+2  & {\bf{3.21e+2}}  & 3.65e+2\\
						C2 &  1.30e+3          & 1.04e+3  &  1.18e+3  &  {\bf{6.27e+2}}  & 1.17e+3  & 8.51e+2\\
						C3 &  {\bf{1.72e+2}}   & 4.49e+4  &  2.17e+3   &  5.06e+3  & 1.73e+2  & 7.77e+4 \\
						C4 &  {\bf{2.32e+2}}   & 4.67e+2  &  2.60e+2   &  6.29e+2  & 2.40e+2  & 9.68e+2\\
						C5 &  {\bf{3.31e-1}}   & 5.67e-1  &  4.11e-1   &  9.19e-1  & 3.39e-1  & 1.16\\
						C6 &  {\bf{2.19e-1}}   & 4.97e-1  &  2.31e-1   &  8.94e-1  & 2.61e-1  & 1.11\\
						\hline
						\multicolumn{7}{c}{Laplacian Noise}\\ \hline
						C1 &  7.63e+1         & 7.29e+1   &  7.13e+1   &  7.76e+1       & 7.24e+1   & {\bf{6.80e+1}}\\
						C2 &  1.72e+1         & 2.57e+1   &  2.44e+1   &  \bf{1.68e+1}  & 2.16e+1   & 2.10e+1\\
						C3 &  {\bf{1.27e+1}}  & 2.02e+1   &  1.84e+1   &  1.31e+1       & 1.69e+1   & 1.42e+1\\
						C4 &  {\bf{7.54e+1}}  & 9.37e+1   &  8.99e+1   &  7.69e+1       & 8.33e+1   & 7.85e+1\\
						C5 &  {\bf{9.17e-2}}  & 1.15e-1   &  1.07e-1   &  9.22e-2       & 1.07e-1   & 9.80e-2\\
						C6 &  {\bf{6.30e-2}}  & 8.25e-2   &  7.84e-2   &  6.49e-2       & 8.24e-2   & 6.56e-2\\
						\hline
						\multicolumn{7}{c}{Sparse Noise}\\ \hline
						C1 & {\bf{8.12e+2}}  & {\bf{8.12e+2}}  & {\bf{8.12e+2}}  &  1.17e+3   & 8.20e+2   & 8.73e+2\\
						C2 & 1.08e+4  & 1.08e+4  & 1.08e+4  &  {\bf{5.12e+3}}   & 1.06e+4   & 5.95e+3\\
						C3 & {\bf{2.37e-12}} & {\bf{2.37e-12}} & 7.94e-12 &  3.09e+4   & 9.75e+1   & 1.59e+6\\
						C4 & {\bf{2.54e-5}}  & 2.55e-5  & 3.48e-5  &  2.12e+3   & 6.03e+1   & 4.89e+3 \\
						C5 & {\bf{3.87e-8}}  & 3.87e-8  & 6.63e-8  &  1.48      & 2.83e-1   & 1.47\\
						C6 & {\bf{2.28e-8}}  & 2.29e-8  & 4.44e-8  &  1.39      & 6.25e-2   & 1.54\\
						\hline
						\multicolumn{7}{c}{Mixture Noise1}\\ \hline
						C1 &  4.49e+2        & 4.55e+2   & 5.25e+2        &  5.25e+2  & {\bf{4.33e+2}}  & 4.35e+2\\
						C2 &  1.36e+3        & 1.25e+3   & {\bf{8.49e+2}} &  8.51e+2  & 1.12e+3         & 1.16e+3\\
						C3 &  {\bf{1.53e+2}} & 6.52e+4   & 8.98e+2        &  8.93e+2  & 3.01e+2         & 1.56e+4 \\
						C4 &  {\bf{1.66e+2}} & 4.38e+2   & 6.02e+2        &  6.00e+2  & 2.87e+2         & 5.15e+2\\
						C5 &  {\bf{3.28e-1}} & 5.79e-1   & 6.47e-1        &  6.60e-1  & 4.30e-1         & 7.88e-1\\
						C6 &  {\bf{1.18e-1}} & 3.78e-1   & 5.01e-1        &  5.01e-1  & 2.93e-1         & 6.84e-1\\
						\hline
						\multicolumn{7}{c}{Mixture Noise2}\\ \hline
						C1 &  9.01e+1         & 8.93e+1 &  8.76e+1  &  9.60e+1        & 8.83e+1  & {\bf{8.32e+1}}\\
						C2 &  3.37e+1         & 4.04e+1 &  3.99e+1  &  {\bf{2.72e+1}} & 3.53e+1  & 3.42e+1\\
						C3 &  {\bf{1.72e+01}} & 2.62e+1 &  2.49e+1  &  2.13e+1        & 2.40e+1  & 1.87e+1 \\
						C4 &  {\bf{8.57e+01}} & 1.04e+2 &  1.01e+2  &  9.71e+1        & 9.77e+1  & 8.87e+1 \\
						C5 &  {\bf{1.02e-01}} & 1.23e-1 &  1.24e-1  &  1.09e-1        & 1.21e-1  & 1.07e-1\\
						C6 &  {\bf{6.39e-02}} & 8.41e-2 &  8.14e-2  &  7.02e-2        & 8.96e-2  & 6.62e-2\\
						\bottomrule[2pt]
					\end{tabular}}
				}
			\end{center}
		\end{table}	
			\begin{figure}
				\centering
				\includegraphics[width=1\linewidth]{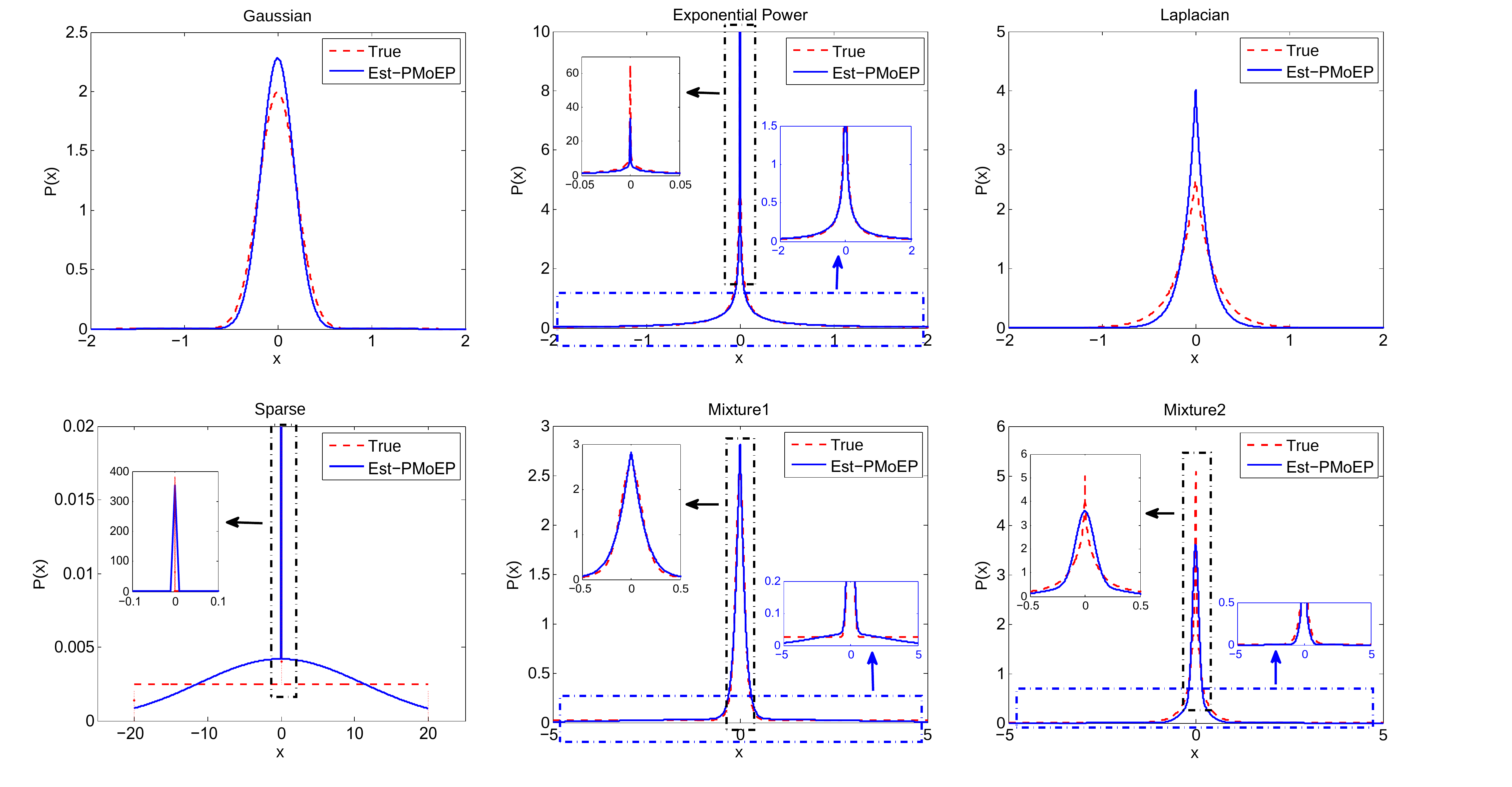}
				\caption{Visual comparison of the ground truth (denote by True) noise probability density functions and those estimated (denote by Est) by the PMoEP method in the synthetic experiments. The embedded sub-figures depict the zoom-in of the indicated portions.}\label{pdfdraw}
			\end{figure}
			
			\begin{figure}
				\centering
				\includegraphics[width=1\linewidth]{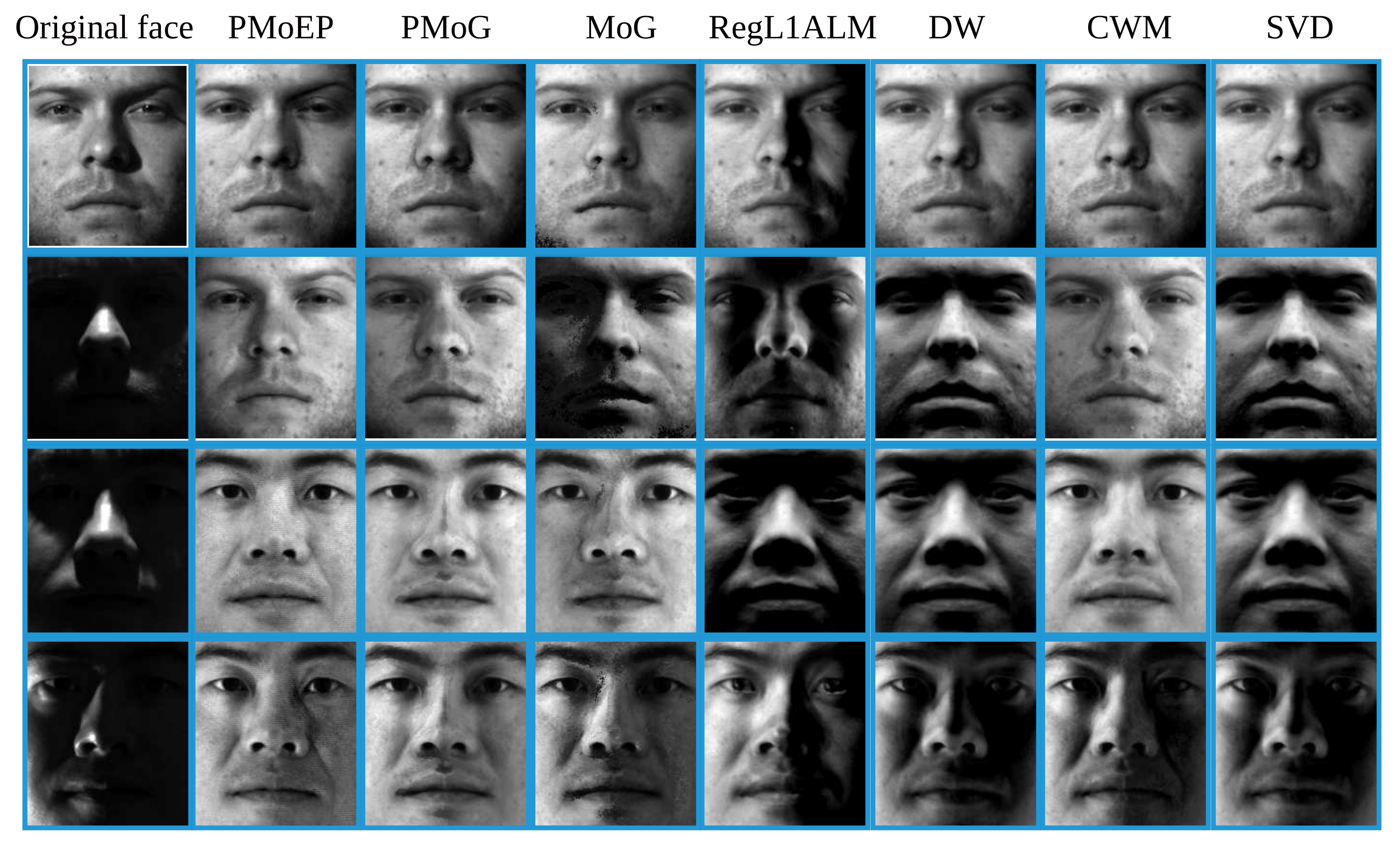}
				\caption{From left to right: original face images, reconstructed faces by PMoEP, PMoG, MoG, RegL1ALM, DW, CWM and SVD.}\label{FaceRecover}
			\end{figure}
We set the rank of all the competing methods to $4$ and adopt the random initialization strategy for all the methods. For each method, we first run with 20 random initializations and then select the best result with respect to the corresponding objective value of the method. The  performance of each method was evaluated as the average results over the 30 random matrices in terms of the six measures, and the results are summarized in Table \ref{simuTab1}. We also report the final selections of the mixture number $K_{final}$ and the corresponding parameter $\lambda_{select}$ for PMoG and PMoEP in Table \ref{table1}.
	
	From Table \ref{simuTab1}, we can observe that $L_2$-norm methods DW, MoG, PMoG and our proposed PMoEP methods achieve the best performance than others in Gaussian noise case. In Laplace noise case, our PMoEP method performs best and $L_1$ method RegL1ALM achieves similar results. When the noise is Exponential Power, PMoEP evidently outperforms other competing methods in term of criteria C3$-$C6. In sparse noise case, PMoEP and PMoG perfom the best and MoG achieves comparable good results with PMoEP. Moreover, when the noise gets more complex, PMoEP achieves the best performance, which attributes to the high flexibility of PMoEP to model unknown complex noise. These results then substantiate that our proposed PMoEP method can estimate a better subspace from the noisy data than other competing methods.
	
    The promising performance of PMoEP method in these cases can be easily explained by Fig. \ref{pdfdraw}, which compares the ground truth noise distributions and the estimated ones by the PMoEP method. It can be easily observed that the estimated noise distributions well match the true ones, which naturally conducts its good reconstruction capability to the true low-rank matrix.

	\subsection{Face modeling}
	This experiment aims to test the effectiveness of PMoG and PMoEP methods in face modeling application. We choose the first and the second subset of the Extended Yale B database\footnote{http://vision.ucsd.edu/~leekc/ExtYaleDatabase/ExtYaleB.html}, and each subset consists of 64 faces of one person with size $192\times168$ and then generate two data matrices, each of which is with size $32256\times 64$. Typical images are shown in the first column of Fig. \ref{FaceRecover}.
	%All the competing methods were implemented.
	
	We set the rank as $4$~\cite{basri2003lambertian} and adopt two initialization strategies, namely random and SVD for all competing methods. Then we report the best result among the results in terms of the object value of the corresponding model utilized by each method. Some reconstructed faces of different methods are visually compared in Fig. \ref{FaceRecover}.
	
	From Fig. \ref{FaceRecover}, it is easy to observe that,  the proposed PMoEP and PMoG methods, as well as the other competing ones, can remove the cast shadows and saturations in faces. However, our PMoEP and PMoG methods perform better than other ones on faces containing a large dark region. Such face images contain both significant cast shadow and saturation noises, which correspond to the highly dark and bright areas in face, and camera noise~\cite{nakamura2005image}, which is much amplified in the dark areas. Compared with other competing methods, PMoEP method is capable of better extracting such complex noise configurations, and thus leads to its better face reconstruction performance.

\begin{figure}
	\centering
	\includegraphics[width=1.0\linewidth]{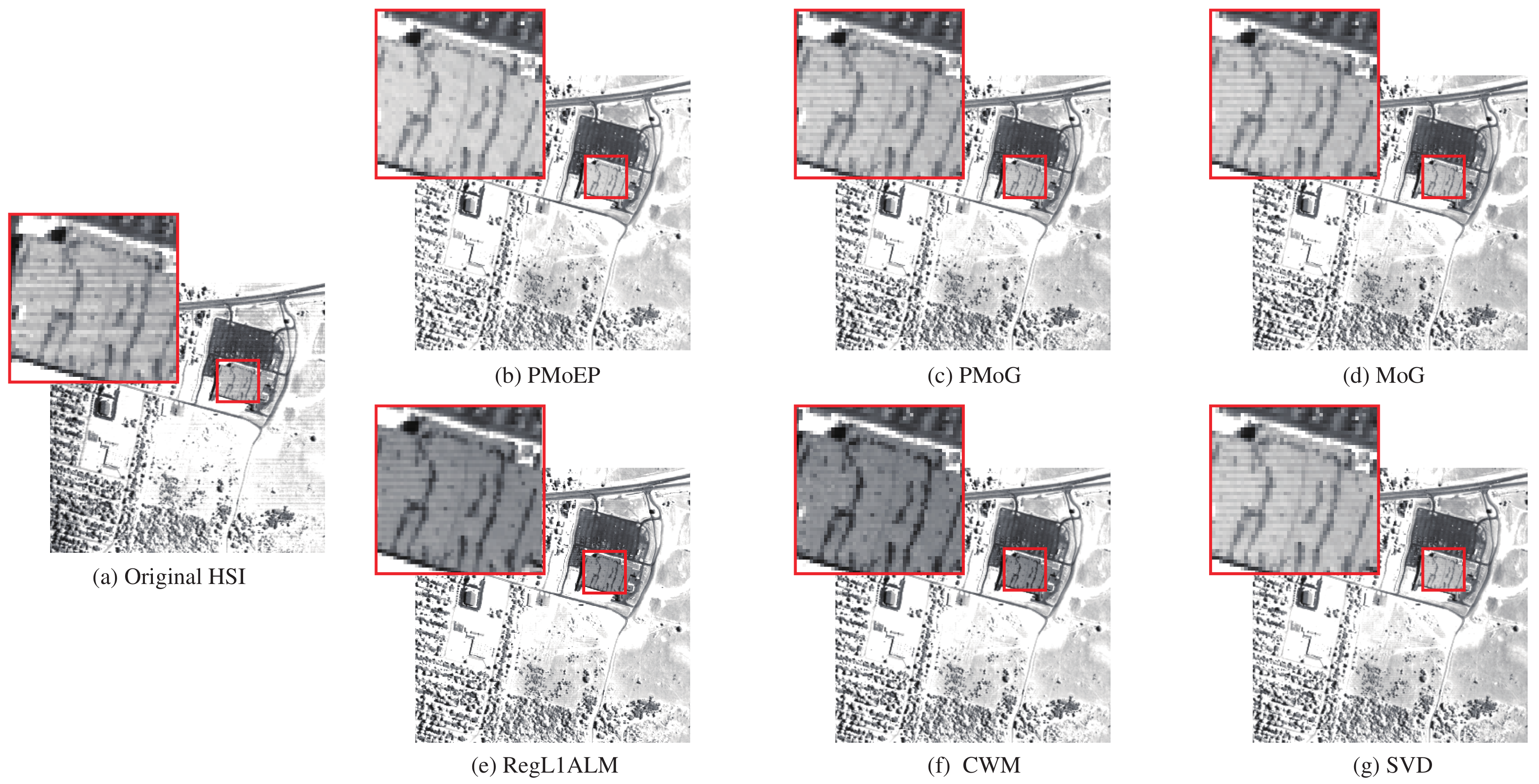}
	\caption{ Restoration results of band 103 in Urban data set: (a) original bands. (b)-(g) reconstructed bands by PMoEP, PMoG, MoG, RegL1ALM, CWM and SVD.}\label{103bandRec}
\end{figure}

\begin{figure}
	\centering
	\includegraphics[width=1.0\linewidth]{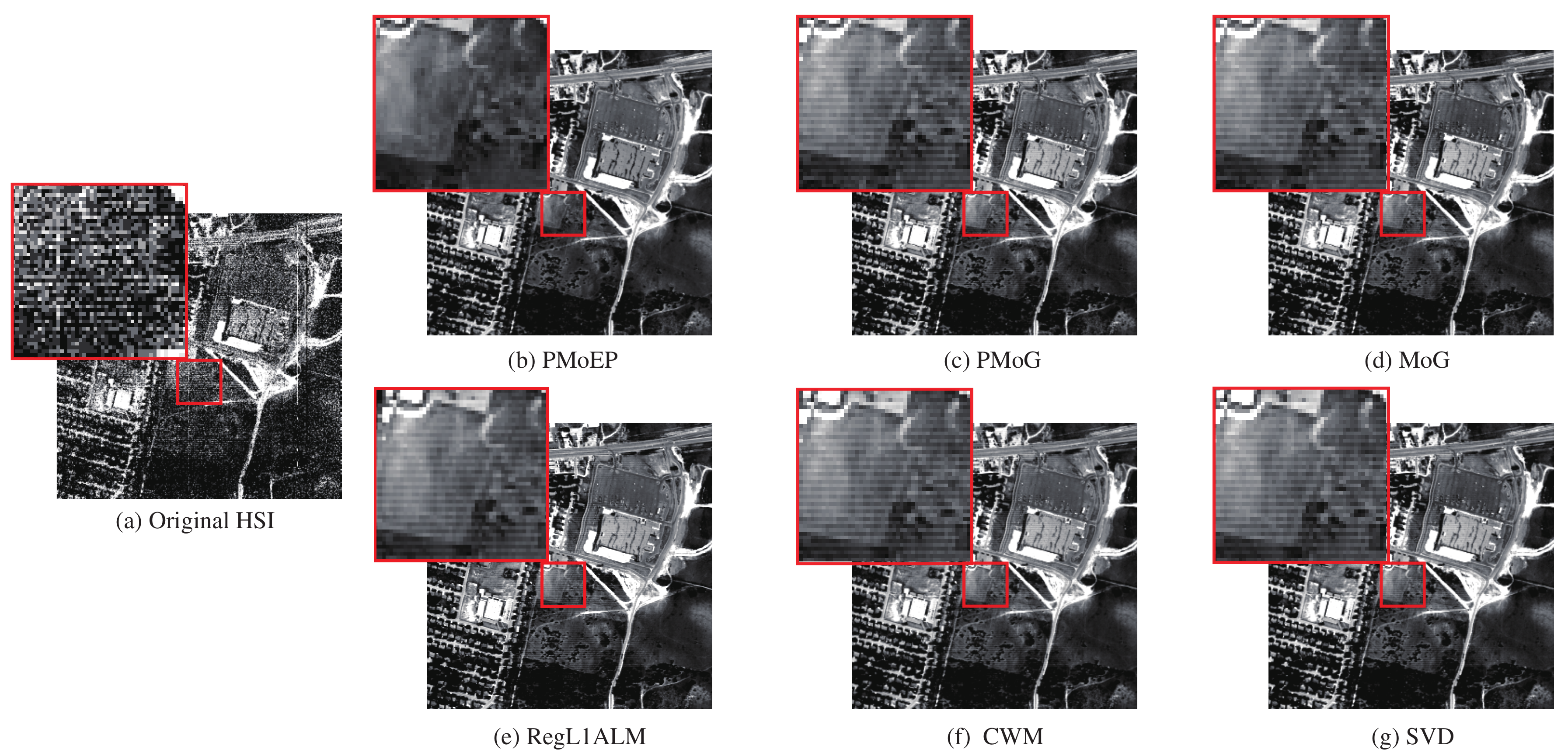}
	\caption{ Restoration results of band 206 in Urban data set: (a) original bands. (b)-(g) reconstructed bands by PMoEP, PMoG, MoG, RegL1ALM, CWM and SVD.}\label{206bandRec}
\end{figure}

\begin{figure}
	\centering
	\includegraphics[width=1.0\linewidth]{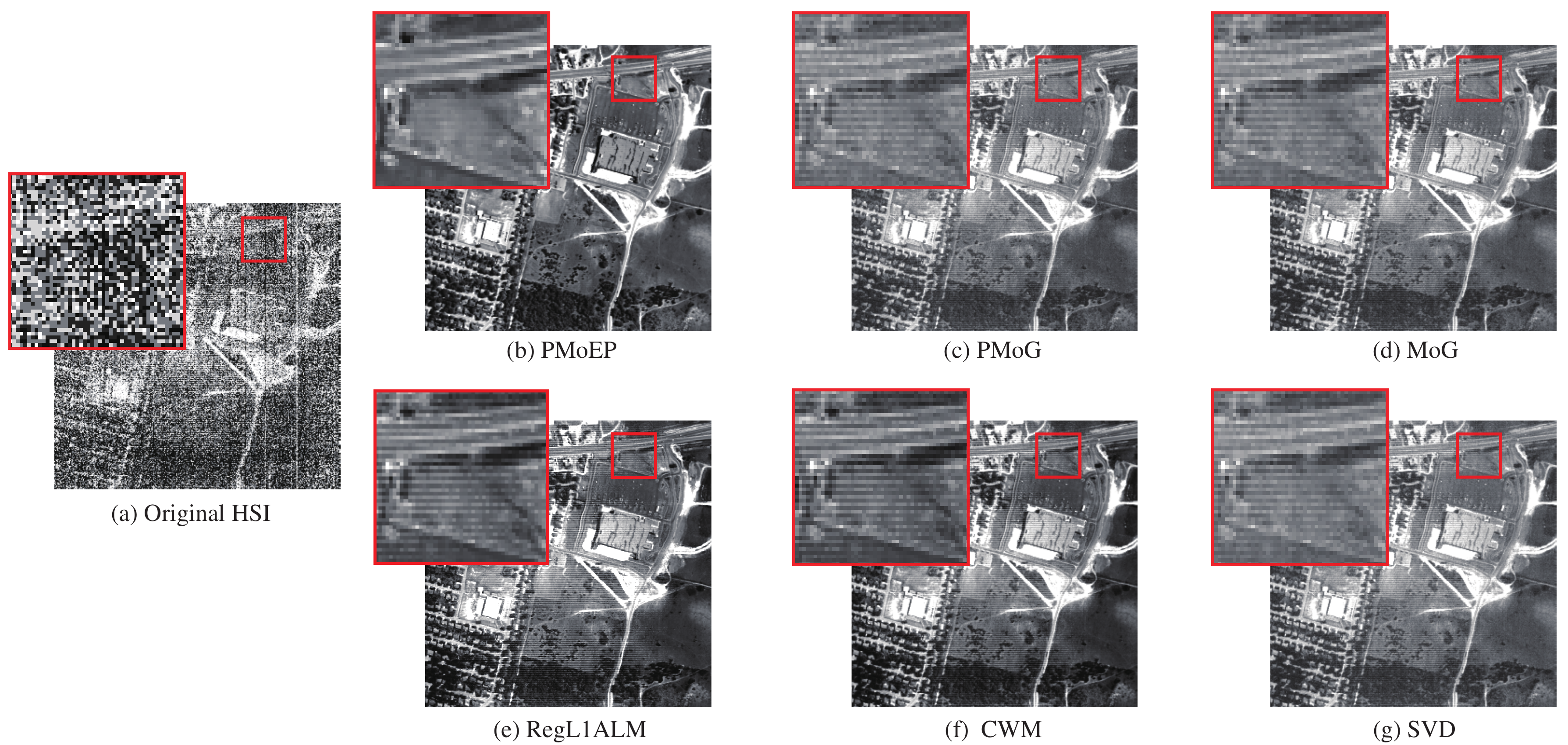}
	\caption{ Restoration results of band 207 in Urban data set: (a) original bands. (b)-(g) reconstructed bands by PMoEP, PMoG, MoG, RegL1ALM, CWM and SVD.}\label{207bandRec}
\end{figure}

\begin{figure}
	\centering
	\includegraphics[width=1.0\linewidth]{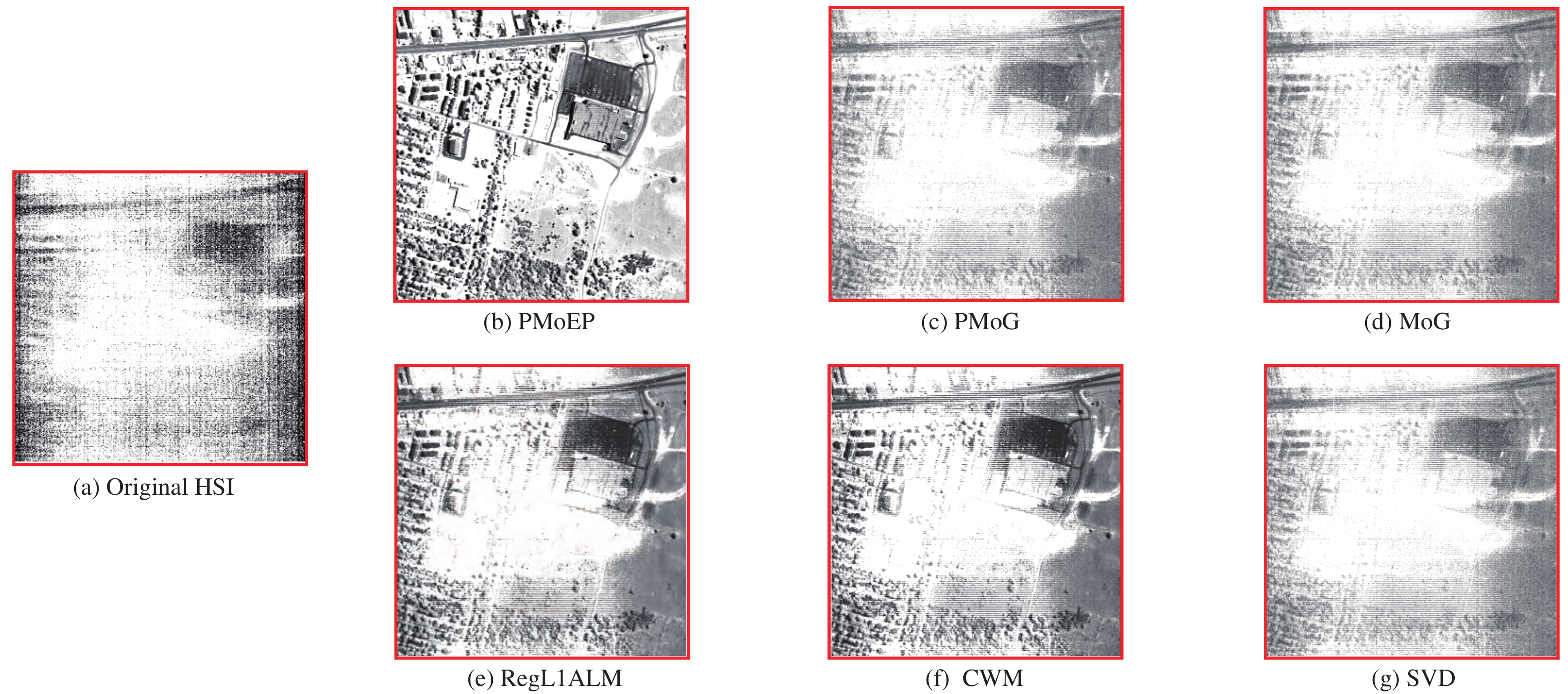}
	\caption{Restoration results of band 107 in Urban data set: (a) original bands. (b)-(g) reconstructed bands by PMoEP, PMoG, MoG, RegL1ALM, CWM and SVD.}\label{107bandRec}
\end{figure}

\begin{figure}
	\centering
	\includegraphics[width=1.0\linewidth]{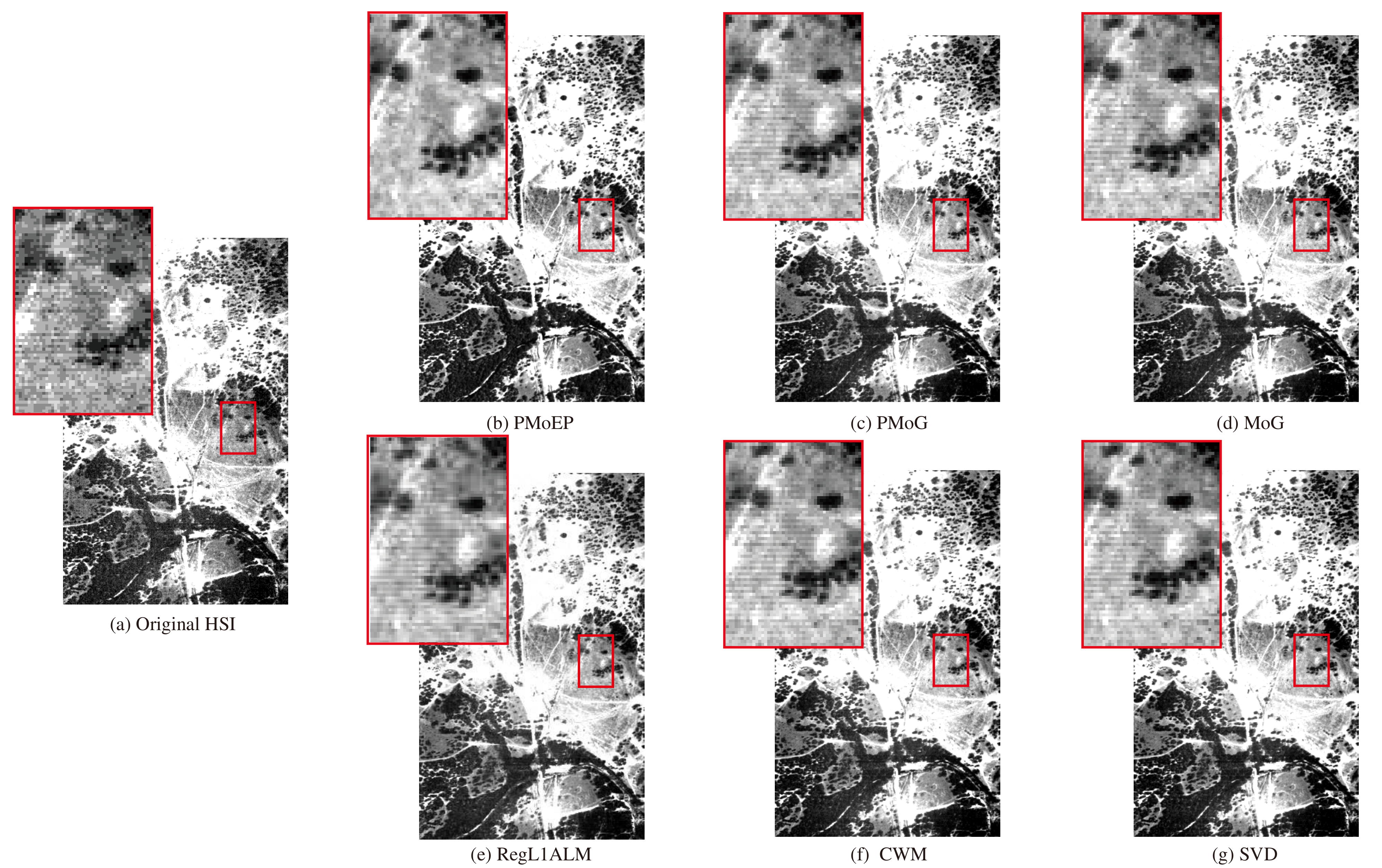}
	\caption{ Restoration results of band 152 in Terrain data set: (a) original bands. (b)-(g) reconstructed bands by PMoEP, PMoG, MoG, RegL1ALM, CWM and SVD.}\label{152Rec}
\end{figure}

\subsection{Hyperspectral Image Restoration}
In this section, we evaluate the performance of our proposed PMoEP method on hyperspectral image restoration problem. Two real hyperspectral image (HSI) data sets\footnote{http://www.tec.army.mil/hypercube.}  were used.

The first dataset is Urban HSI data. This dataset contains $210$ bands, each of which  is $307\times307$, and some bands are seriously polluted by atmosphere and water and corrupted by noises with complex structures, as shown in Fig. \ref{intro_fig}. We reshape each band as a vector, and stack all the vectors into a matrix, resulting in the final data matrix with size $94249\times210$. The second one is the Terrain dataset. The original images are of size $500\times307\times210$. We use all the bands in our experiments and thus generate a $153500\times210$ data matrix. Therefore, we get two data matrices used to test our methods. All the competing methods were implemented, except DW method which encounters the `out of memory' problem.

\begin{figure}
	\centering
	\includegraphics[width=1.0\linewidth]{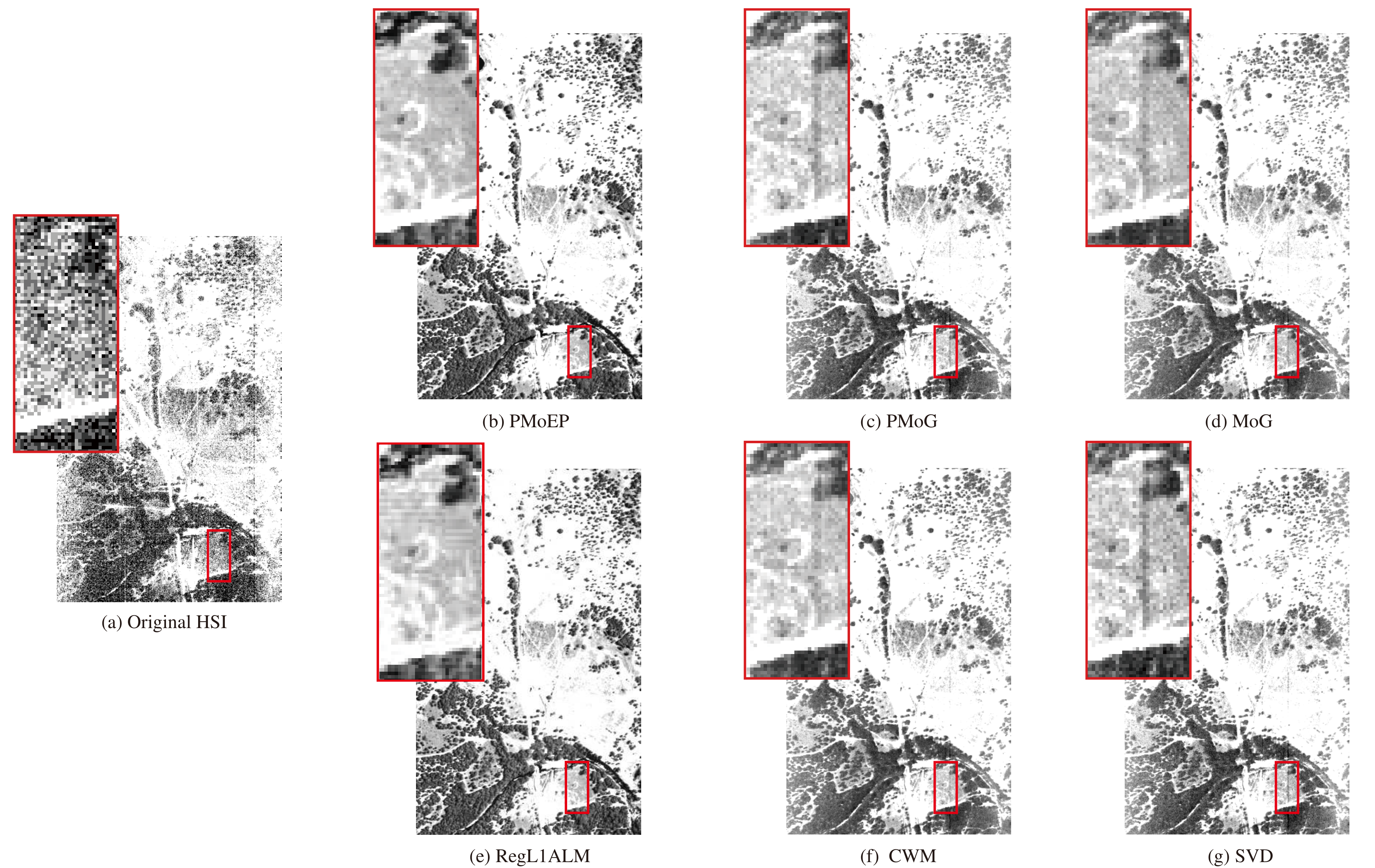}
	\caption{ Restoration results of band 206 in Terrain data set: (a) original bands. (b)-(g) reconstructed bands by PMoEP, PMoG, MoG, RegL1ALM, CWM and SVD.}\label{206Rec}
\end{figure}

\begin{figure}
	\centering
	\includegraphics[width=1.0\linewidth]{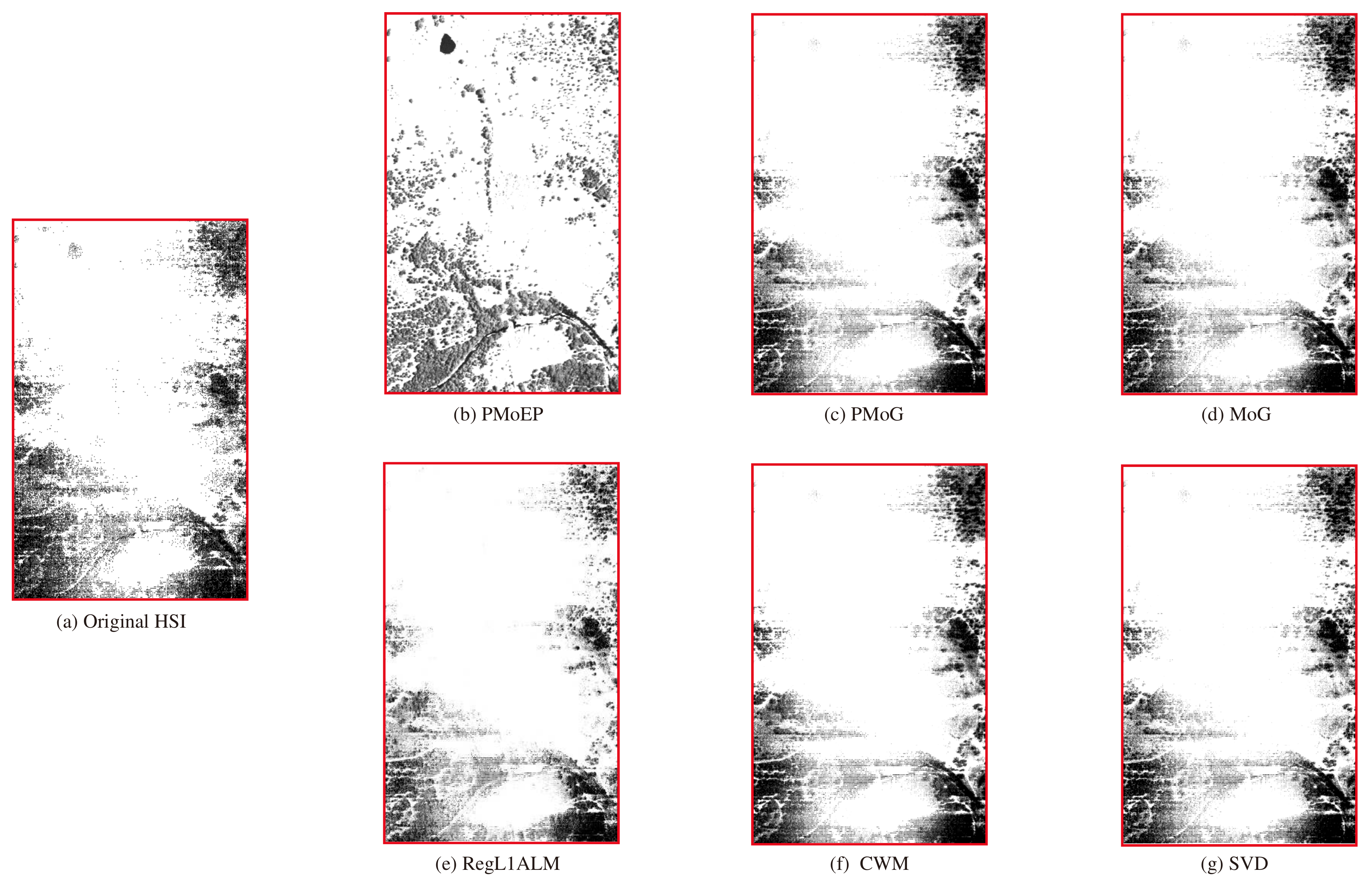}
	\caption{ Restoration results of band 139 in Terrain data set: (a) original bands. (b)-(g) reconstructed bands by PMoEP, PMoG, MoG, RegL1ALM, CWM and SVD.}\label{139Rec}
\end{figure}

\begin{figure*}
	\centering
	\includegraphics[width=0.8\linewidth]{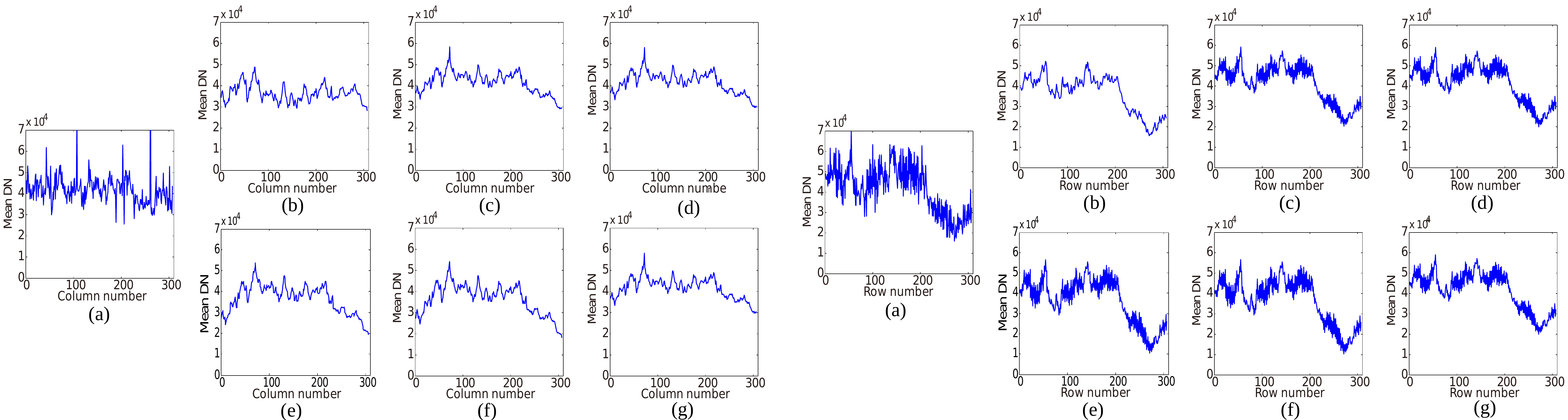}
	\caption{ Vertical (left) and Horizontal (right) mean profiles of band 207 in the Urban data set: (a) original, (b) PMoEP, (c) PMoG, (d) MoG, (e) RegL1ALM, (f) CWM, (g) SVD.}\label{M207}
\end{figure*}

\begin{figure*}
	\centering
	\includegraphics[width=0.8\linewidth]{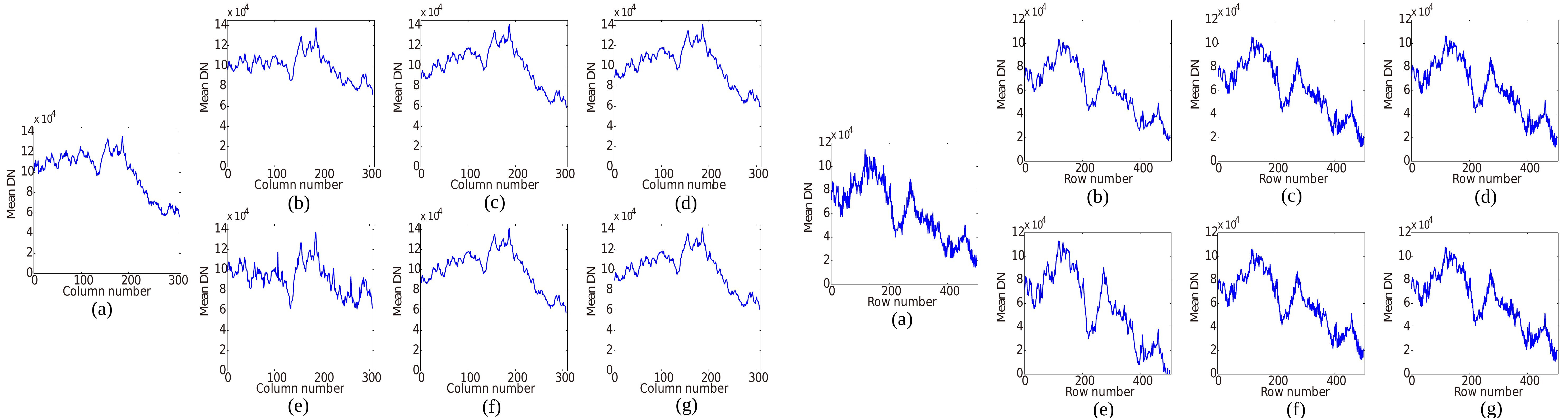}
	\caption{Vertical (left) and Horizontal (right) mean profiles of band 152 in the Terrain data set: (a) original, (b) PMoEP, (c) PMoG, (d) MoG, (e) RegL1ALM, (f) CWM, (g) SVD.}\label{M152}
\end{figure*}

\begin{figure*}
	\centering
	\includegraphics[width=0.8\linewidth]{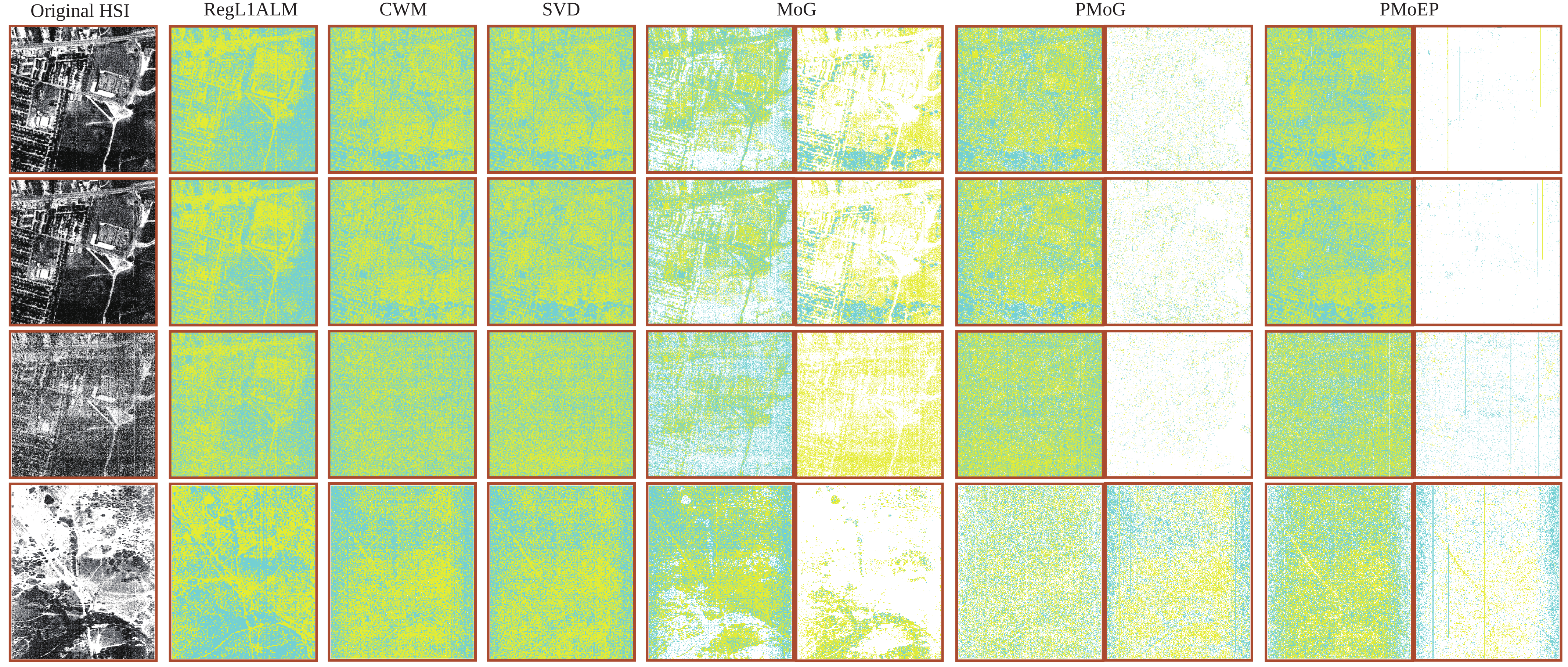}
	\caption{From top to bottom, band 204, 206, 207 of Urban, band 208 of Terrain. From left to right: original bands, and extracted noise by RegL1ALM, CWM, SVD, MoG, PMoG and PMoEP. The noises with positive and negative values are depicted in yellow and blue, respectively. This figure should be viewed in color and the details are better seen by zooming on a computer screen.}\label{UrbanNoise}
\end{figure*}
%\begin{figure*}
%	\centering
	%\includegraphics[width=1.0\linewidth]{Terrain_Noise}
	%\caption{From top to bottom, band 205 and 198. From left to right: original bands, and extracted noise by RegL1ALM, CWM, SVD, MoG, PMoG and PMoEP. The noises with positive and negative values are depicted in yellow and blue, respectively. This figure should be viewed in color and the details are better seen by zooming on a computer screen.}\label{TerrainNoise}
%\end{figure*}

The reconstructed hyperspectral images of bands $103$, $206$, $207$ and $107$ in Urban dataset and bands $152$, $206$ and $139$ in Terrain dataset are shown in Fig. \ref{103bandRec}$-$\ref{107bandRec} and  Fig. \ref{152Rec}$-$\ref{139Rec}, respectively. For easy observation, an area of interest is amplified in the restored images obtained by all the competing methods. It can be easily seen from the figures that for some bands containing evident stripes and deadlines, the image restored by the proposed PMoEP method is clean and smooth, while the results obtained by the other competing ones contain evident stripe area. In addition, as is demonstrated in Fig. \ref{107bandRec} and  Fig. \ref{139Rec}, the PMoEP method can effectively recover the seriously polluted bands, while the other methods failed on them. These results show that our proposed PMoEP method can not only remove complicated noises embedded in HSI, but also can perform robust in the presence of extreme outlier cases like in Fig. \ref{107bandRec} and Fig. \ref{139Rec}.

Then we give more quantitative comparison by showing the vertical mean profiles and horizontal mean profiles of band 207 in Urban dataset and band 152 in Terrain dataset before and after reconstruction in Fig. \ref{M207} and Fig. \ref{M152}. The horizontal axis of Fig. \ref{M207} represents the column (left) and row (right) number, and the vertical axis represents the mean DN value of each column (left) and row (right). It is easy to observe that the curves in Fig. \ref{M207}(a) and \ref{M152}(a) (right) have drastic fluctuations for the original image. This is deviated from the prior knowledge that the adjacent bands should possess similar shapes since they are captured under relatively similar sensor settings. After the reconstruction, the fluctuations in vertical direction have been reduced by most of the methods. While in the horizontal direction (see Fig. \ref{M207} (right) and Fig. \ref{M152} (right)), the PMoEP method provides evidently smoother curves, which indicates that the stripes in the horizontal direction have been removed more effectively by our method. The results are consistent with the recovered HSIs in Fig. \ref{207bandRec} and Fig. \ref{152Rec}.

The better performance of PMoEP over other methods is due to its more powerful ability in noise modeling. Specifically, as depicted in Fig. \ref{UrbanNoise}, PMoEP can more properly extract noise information from the corrupted images with physical meanings, such as sparse strips, sparse deadlines, and dense Gaussian noise, while other competing methods fail to do so.

\begin{figure*}\label{ForDet}
	\centering
	\includegraphics[width=0.85\linewidth]{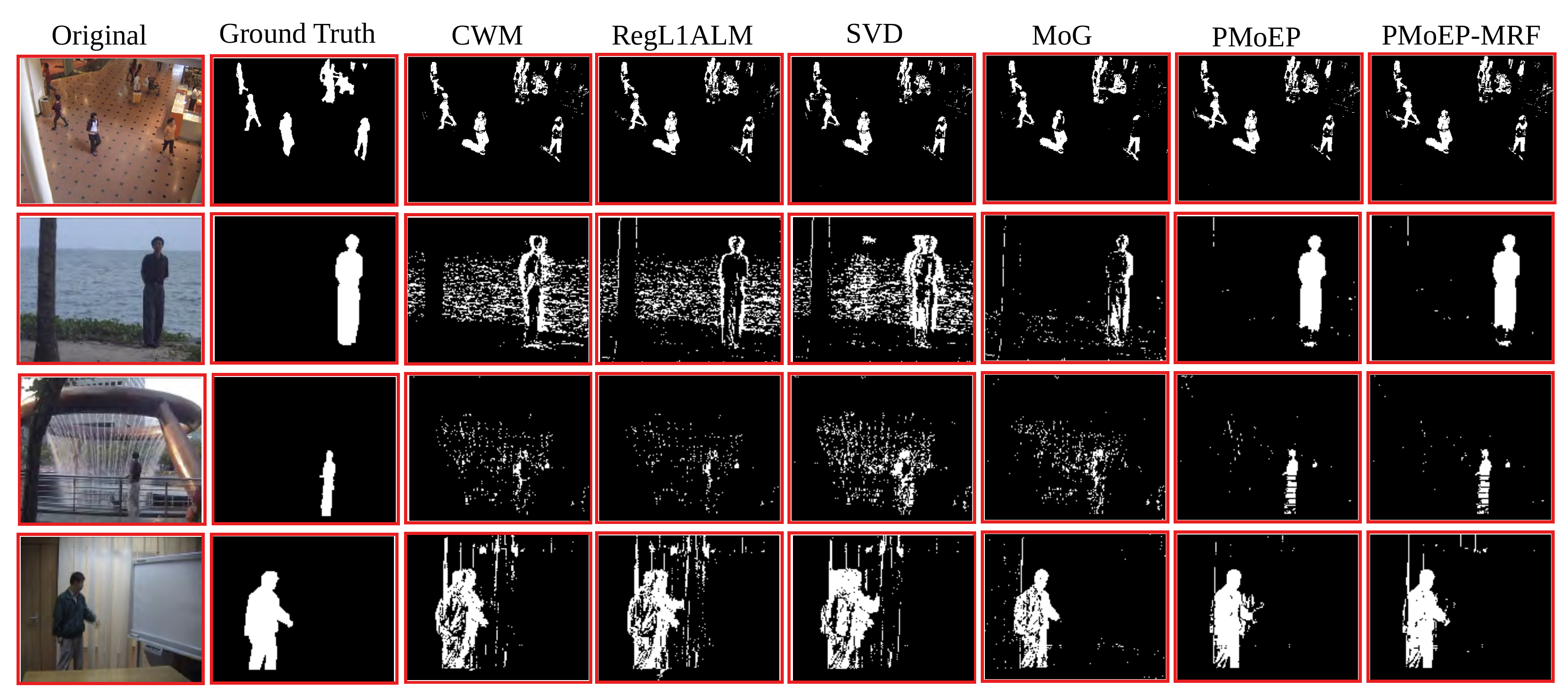}
	\caption{Foreground Detection results of different methods on sample frames.}\label{watersurface}
\end{figure*}

\subsection{Background Subtraction}
In this section, we evaluate the performance of our proposed methods on background subtraction problem. The background subtraction from a video sequence captured by a static camera can be modeled as a low-rank matrix analysis problem~\cite{wright2009robust}. All the nine standard video sequences\footnote{http://perception.i2r.a-star.edu.sg/bk\_model/bk\_index.html} provided by Li et.al~\cite{li2004statistical} were adopted in our evaluation, including simple and complex background. Ground truth foreground regions of 20 frames were provided for each sequence.

We compared our PMoEP and PMoEP-MRF methods with the state-of-the-art LRMF methods: SVD, RegL1ALM, CWM and MoG methods. To conduct the experiments, we first ran each method on each video sequence to estimate the background. Then we obtained the recovered foreground by calculating the absolute values of the difference between the original frame and the estimated background. For MoG, PMoEP and PMoEP-MRF methods, we obtained the foreground by selecting the noise component with largest variance.

For quantitative evaluation, we first introduce some evaluation indices. We measure the recovery accuracy of the support in the foreground by comparing the true support $S$ with the detected support $\tilde{S}$. We regard it as a classification problem and thus can evaluate the results using precision and recall, which are defined as:
\begin{displaymath}
precision = \frac{TP}{TP+FP},~~~ recall = \frac{TP}{TP+FN},
\end{displaymath}
where $TP$, $FP$, $TN$ and $FN$ represent the numbers of true positive,
false positive, true negative and false negative, respectively.
For simplicity, we adopt $F\textrm{-}measure$ that combines
precision and recall together:
\begin{displaymath}
F\textrm{-}measure = 2\times \frac{precision\times recall}{precision+recall}.
\end{displaymath}
The higher F\textrm{-}measure value means the
better recovery accuracy of the support. Additionally, the
recovered support $\tilde{S}$ is obtained by thresholding the recovered foreground  $E$ with a threshold value that gives the maximal F\textrm{-}measure. For all competing methods, we adopt two initialization strategies, namely, random and SVD. Then we report the best result among the two initializations. The results are summarized in Table \ref{simuTab2}.

	\begin{table}[htp]
		\caption{\label{simuTab2}Performance evaluation on Video data. The best and second best results for each video dataset are highlighted in bold and in italic bold, respectively.}
		\begin{center}
			{\normalsize
				\scalebox{0.7}[0.75]{
					\begin{tabular}{cccccccc}
						\toprule[2pt]
						Video  & SVD & RegL1ALM & CWM & MoG & PMoEP &  PMoEP-MRF \\
						\midrule[2pt]
						\multicolumn{7}{c}{$F\textrm{-}measure$}\\ \hline
						Campus       & 0.4716  & {\bf{0.5308}} & \textbf{\emph{0.5301}} &  0.4633  & 0.5065 & 0.5115\\
						Lobby        & 0.7623  & 0.7679 & \textbf{\emph{0.7681}} &   {\bf{0.7724}}  & 0.7650 & 0.7444\\
						ShoppingMall & 0.6990  & \textbf{\emph{0.7138}} & {\bf{0.7173}}  &   0.6387  &  0.7037 & 0.7015\\
						Bootstrap    & 0.6234  & {\bf{0.6749}}  & 0.6533 &   0.4234  &  0.6404 &  \textbf{\emph{0.6635}} \\
						Hall         & 0.4104  & 0.4659   &  0.4624  &  0.4523  &  \textbf{\emph{0.5372}}  & {\bf{0.5438}}\\
						Curtain      & 0.5273  & 0.5342  &  0.5316  &  0.7869  & {\bf{0.7895}} & \textbf{\emph{0.7888}}\\
						Fountain     & 0.4989  & 0.5298 & 0.5262 &   0.5782  & \textbf{\emph{0.6843}} &  {\bf{0.7295}}\\
						WaterSurface & 0.3416  & 0.2840 & 0.2920 &   0.5979  & \textbf{\emph{0.8515}} & {\bf{0.8651}}\\
						Escalator    & 0.2675  & 0.2998 & 0.2972 &   0.2675  & \textbf{\emph{0.3255}} & {\bf{0.3408}}\\
						\hline
						Average      & 0.5113  & 0.5334 & 0.5309 &   0.5534  & \textbf{\emph{0.6448}} & {\bf{0.6543}}\\
						\bottomrule[2pt]
					\end{tabular}}
				}
			\end{center}
		\end{table}

From Table \ref{simuTab2}, it can be easily seen that our proposed PMoEP and PMoEP-MRF methods outperform other methods in the sequences of Hall, Curtain, Fountain, WaterSurface and Escalator, of which the background is with complex shapes. For the sequences with simple background, including Bootstrap, ShoppingMall, Campus and Lobby, the performances of all the methods are almost the same. On average, the PMoEP method achieves the second best performance. Compared with the PMoEP method, the PMoEP-MRF method slightly improves the average performance due to the modeling of spatial and temporal smoothness prior knowledge under foreground using Markov random field.

The better performance of PMoEP and PMoEP-MRF methods can be visually shown in Fig. 15. It can be easily seen from the figure that the proposed PMoEP and PMoEP-MRF can perform comparably well as other methods in simple foreground cases, while evidently better in much complicated scenarios, e.g., videos with dynamic background.

\section{Conclusions}
In this paper, we model the noise of the LRMF problem as a Mixture of Exponential Power (MoEP) distributions and proposes a penalized MoEP (PMoEP) model by combining the penalized likelihood method with the MoEP distributions. Moreover, by facilitating the local continuity of noise components along both space and time of a video, we embed Markov random field into PMoEP and then propose the PMoEP-MRF model. Compared with the current LRMF methods, our PMoEP method performs better in a wide variety of synthetic and real complex noise scenarios including face modeling, hyperspectral image restoration, and background subtraction applications. Additionally, our methods are capable of automatically learning the number of components from data, and thus can be used to deal with more complex applications. In the future, we'll attempt to extend the noise modeling methodology under PMoEP to more computer vision and machine learning tasks, e.g., the high-order low rank tensor factorization problems.

\section*{Acknowledgements}
This research was supported by 973 Program of China
with No.3202013CB329404, the NSFC projects with No.11131006, 91330204 and 61373114.

\appendices
\section{Proof of Theorem 1}
\begin{proof}
	(i) First, we calculate that
	\footnotesize
	\begin{eqnarray}
	\begin{split}
	l_{P}^{\textsc{C}}(\mathbf{\Theta})-l_{P}^{C}(\mathbf{\Theta}^{(t)})
	&=l(\mathbf{\Theta})-l(\mathbf{\Theta}^{(t)})+P(\pi^{(t)};\lambda)-P(\pi;\lambda)\nonumber \\
	 &=\log{\sum_{\mathbf{Z}}\mathbb{P}(\mathbf{Z}|\mathbf{E};\mathbf{\Theta}^{(t)})\frac{\mathbb{P}(\mathbf{E}|\mathbf{Z};\mathbf{\Theta})\mathbb{P}(\mathbf{Z};\mathbf{\Theta})}{\mathbb{P}(\mathbf{Z}|\mathbf{E};\mathbf{\Theta}^{(t)})}}\\
	&~~~-\log{\mathbb{P}(\mathbf{E};\mathbf{\Theta}^{(t)})}+P(\pi^{(t)};\lambda)-P(\pi;\lambda)\nonumber \\
	&\geq \sum_{\mathbf{Z}}\mathbb{P}(\mathbf{Z}|\mathbf{E};\mathbf{\Theta}^{(t)})\log{\frac{\mathbb{P}(\mathbf{E}|\mathbf{Z};\mathbf{\Theta})\mathbb{P}(\mathbf{Z};\mathbf{\Theta})}{\mathbb{P}(\mathbf{Z}|\mathbf{E};\mathbf{\Theta}^{(t)})}}\\
	&~~~-\log{\mathbb{P}(\mathbf{E};\mathbf{\Theta}^{(t)})}+P(\pi^{(t)};\lambda)-P(\pi;\lambda)\nonumber \\
	 &=\sum_{\mathbf{Z}}\mathbb{P}(\mathbf{Z}|\mathbf{E};\mathbf{\Theta}^{(t)})\log{\frac{\mathbb{P}(\mathbf{E}|\mathbf{Z};\mathbf{\Theta})\mathbb{P}(\mathbf{Z};\mathbf{\Theta})}{\mathbb{P}(\mathbf{Z}|\mathbf{E};\mathbf{\Theta}^{(t)})\mathbb{P}(\mathbf{E};\mathbf{\Theta}^{(t)})}}\\
	&~~~+P(\pi^{(t)};\lambda)-P(\pi;\lambda).\nonumber
	\end{split}
	\end{eqnarray}
	Let $\Omega(\mathbf{\Theta}|\mathbf{\Theta}^{(t)})=\sum_{\mathbf{Z}}\mathbb{P}(\mathbf{Z}|\mathbf{E};\mathbf{\Theta}^{(t)})\log{\frac{\mathbb{P}(\mathbf{E}|\mathbf{Z};\mathbf{\Theta})\mathbb{P}(\mathbf{Z};\mathbf{\Theta})}
		{\mathbb{P}(\mathbf{Z}|\mathbf{E};\mathbf{\Theta}^{(t)})\mathbb{P}(\mathbf{E};\mathbf{\Theta}^{(t)})}}$, then
	\begin{equation}\label{lemma1}
	l_{P}^{C}(\mathbf{\Theta})\geq l_{P}^{C}(\mathbf{\Theta}^{(t)}) + \Omega(\mathbf{\Theta}|\mathbf{\Theta}^{(t)})+P(\pi^{(t)};\lambda)-P(\pi;\lambda)\nonumber.
	\end{equation}
	\normalsize
	\\
	(ii) In the M step of Algorithm 1, it is obvious that
	\footnotesize
	\begin{eqnarray}
	\mathbf{\Theta}^{(t+1)}\!&=&\! \underset{\mathbf{\Theta}}{\!\arg\max}\left\{\sum_{\mathbf{Z}}\mathbb{P}(\mathbf{Z}|\mathbf{E};\mathbf{\Theta}^{(t)})\log{\mathbb{P}(\mathbf{E},\mathbf{Z};\mathbf{\Theta})}\!-\!P(\pi;\lambda)\right\}\nonumber\\
	 \!&=&\!\underset{\mathbf{\Theta}}{\!\arg\max}\left\{\sum_{\mathbf{Z}}\mathbb{P}(\mathbf{Z}|\mathbf{E};\mathbf{\Theta}^{(t)})\frac{\log{\mathbb{P}(\mathbf{E},\mathbf{Z};\mathbf{\Theta})}}
	{\log{\mathbb{P}(\mathbf{E},\mathbf{Z};\mathbf{\Theta}^{(t)})}}\!-\!P(\pi;\lambda)\right\}\nonumber\\
	 \!&=&\!\underset{\mathbf{\Theta}}{\!\arg\max}\left\{\Omega(\mathbf{\Theta}|\mathbf{\Theta}^{(t)})+P(\pi^{(t)};\lambda)-P(\pi;\lambda)\right\}.\nonumber
	\end{eqnarray}
	\normalsize
	Thus, we have
	\begin{eqnarray}
	\begin{split}
	&\Omega(\mathbf{\Theta}^{(t+1)}|\mathbf{\Theta}^{(t)})+P(\pi^{(t)};\lambda)-P(\pi^{(t+1)};\lambda)\\
	&\geq
	\Omega(\mathbf{\Theta}^{(t)}|\mathbf{\Theta}^{(t)})+P(\pi^{(t)};\lambda)-P(\pi^{(t)};\lambda)=0
	\end{split}
	\end{eqnarray}
	Then, we can easily derive that
	\begin{equation}\label{inequ}
	l_{P}^{C}(\mathbf{\Theta}^{(t+1)})\geq l_{P}^{C}(\mathbf{\Theta}^{(t)}).\nonumber
	\end{equation}
	Based on (\ref{inequ}), the sequence $\{l_{P}^{G}(\mathbf{\Theta}^{(t)})\}_{t=1}^{\infty}$ is nondecreasing and bounded above. Therefore, there exits a constant
	$l^{\star}$ such that
	\begin{equation}
	\lim_{t\rightarrow \infty}l_{P}^{C}(\mathbf{\Theta}^{(t)}) = l^{\star}.\nonumber
	\end{equation}
\end{proof}

\section{Exponential Power Distribution}
\subsection{Three different forms of Exponential Power Distribution}
The Exponential Power Distribution ($\mu=0$) has the following three equivalent forms:
\begin{equation}\label{form1}
f_{p}(x;0,\sigma)=\frac{1}{2\sigma p^{\frac{1}{p}}\Gamma(1+\frac{1}{p})}\exp\left\{-\frac{|x|^{p}}{p\sigma^{p}}\right\}.\nonumber
\end{equation}
Let $\tau = (p\sigma^{p})^{\frac{1}{p}}$, then
\begin{equation}\label{form2}
f_{p}(x;0,\tau)=\frac{1}{2\tau\Gamma(1+\frac{1}{p})}\exp\left\{-|\frac{x}{\tau}|^{p}\right\}.\nonumber
\end{equation}
Let $\eta = \frac{1}{\tau^{p}}$, then
\begin{equation}\label{form3}
f_{p}(x;0,\eta)=\frac{\eta^{\frac{1}{p}}}{2\Gamma(1+\frac{1}{p})}\exp\left\{-\eta|x|^{p}\right\}.\nonumber
\end{equation}
Noting that $\Gamma(1+\frac{1}{p})=\frac{1}{p}\Gamma(\frac{1}{p})$, then we can represent the above three forms in equivalent forms.

\subsection{Draw Samples from Exponential Power Distribution}
The second form of exponential power distribution is
\begin{equation}\label{form2}
f_{p}(x;0,\tau)=\frac{1}{2\tau\Gamma(1+\frac{1}{p})}\exp\left\{-|\frac{x}{\tau}|^{p}\right\}.\nonumber
\end{equation}
Sampling from the exponential power distribution contains two cases: $p\geq 1$ and $0<p<1$.
\subsubsection{case 1: $p\geq 1$}
We adopt the method proposed in \cite{chiodi1995generation,marsaglia1964convenient,mineo2005software}.
\subsubsection{case 2: $0<p<1$}
When $0<p<1$, the method proposed in \cite{polson2014bayesian} is used. We sample the distribution in two steps:
\begin{equation}\label{samplew}
(w|p) \sim \frac{1+p}{2}Ga(2+\frac{1}{p},1) + \frac{1-p}{2}Ga(1+\frac{1}{p},1),
\end{equation}
\begin{equation}\label{samplebeta}
(\beta|\tau,w,p) \sim \frac{1}{\tau w^{\frac{1}{p}}}\left\{1-|\frac{\beta}{\tau w^{\frac{1}{p}}}|\right\}_{+},
\end{equation}
where $w$ is a intermediate variable. (\ref{samplew}) can be sampled directly but (\ref{samplebeta}) is difficult. Therefore, we
adopt the slice sampling strategy in \cite{bishop2006pattern}.

%% use section* for acknowledgement
%\section*{Acknowledgment}
%
%
%The authors would like to thank...

% Can use something like this to put references on a page
% by themselves when using endfloat and the captionsoff option.
\ifCLASSOPTIONcaptionsoff
\newpage
\fi

\bibliographystyle{ieee}
\bibliography{mybibfile}

\end{document}